\newcommand{\poly}{\mathrm{poly}}
\icmltitlerunning{Horizon-Free Offline RL}
\begin{document}

\twocolumn[
\icmltitle{Nearly Horizon-Free Offline Reinforcement Learning}



\icmlsetsymbol{equal}{*}

\begin{icmlauthorlist}
		\icmlauthor{Aeiau Zzzz}{equal,to}
		\icmlauthor{Bauiu C.~Yyyy}{equal,to,goo}
		\icmlauthor{Cieua Vvvvv}{goo}
		\icmlauthor{Iaesut Saoeu}{ed}
		\icmlauthor{Fiuea Rrrr}{to}
		\icmlauthor{Tateu H.~Yasehe}{ed,to,goo}
		\icmlauthor{Aaoeu Iasoh}{goo}
		\icmlauthor{Buiui Eueu}{ed}
		\icmlauthor{Aeuia Zzzz}{ed}
		\icmlauthor{Bieea C.~Yyyy}{to,goo}
		\icmlauthor{Teoau Xxxx}{ed}
		\icmlauthor{Eee Pppp}{ed}
	\end{icmlauthorlist}
	
	\icmlaffiliation{to}{Department of Computation, University of Torontoland, Torontoland, Canada}
	\icmlaffiliation{goo}{Googol ShallowMind, New London, Michigan, USA}
	\icmlaffiliation{ed}{School of Computation, University of Edenborrow, Edenborrow, United Kingdom}
	
	\icmlcorrespondingauthor{Cieua Vvvvv}{c.vvvvv@googol.com}
	\icmlcorrespondingauthor{Eee Pppp}{ep@eden.co.uk}

\icmlkeywords{Machine Learning, ICML}

\vskip 0.3in
]



\printAffiliationsAndNotice{\icmlEqualContribution} 


\begin{abstract}
We revisit offline reinforcement learning on episodic time-homogeneous Markov Decision Processes (MDP). For tabular MDP with $S$ states and $A$ actions, or linear MDP with anchor points and feature dimension $d$, given the collected $K$ episodes data with minimum visiting probability of (anchor) state-action pairs $d_m$, we obtain nearly horizon $H$-free sample complexity bounds for offline reinforcement learning when the total reward is upper bounded by $1$. Specifically:
\begin{itemize}[leftmargin=*]
\vspace{-0.5em}
	\item For offline policy evaluation, we obtain an $\tilde{O}\left(\sqrt{\frac{1}{Kd_m}} \right)$ error bound for the plug-in estimator, which matches the lower bound up to logarithmic factors and does not have additional dependency on $\poly\left(H, S, A, d\right)$ in higher-order term.
	\item For offline policy optimization, we obtain an $\tilde{O}\left(\sqrt{\frac{1}{Kd_m}} + \frac{\min(S, d)}{Kd_m}\right)$ sub-optimality gap for the empirical optimal policy, which approaches the lower bound up to logarithmic factors and a high-order term, improving upon the best known result by \cite{cui2020plug} that has additional $\poly\left(H, S, d\right)$ factors in the main term. 
\end{itemize}
To the best of our knowledge, these are the \emph{first} set of nearly horizon-free bounds for episodic time-homogeneous offline tabular MDP and linear MDP with anchor points. Central to our analysis is a simple yet effective recursion based method to bound a ``total variance'' term in the offline scenarios, which could be of individual interest.
\end{abstract}


\setlength{\abovedisplayskip}{1pt}
\setlength{\abovedisplayshortskip}{1pt}
\setlength{\belowdisplayskip}{1pt}
\setlength{\belowdisplayshortskip}{1pt}
\setlength{\jot}{1pt}

\setlength{\floatsep}{1ex}
\setlength{\textfloatsep}{1ex}

\section{Introduction}


Reinforcement Learning~(RL) aims to learn to make sequential decisions to maximize the long-term reward in unknown environments, and  has demonstrated success in game-playing~\citep{mnih2013playing,silver2016mastering}, robotics~\citep{andrychowicz2020learning}, and automatic algorithm design~\citep{dai2017learning}. 
These successes rely on being able to deploy the algorithms in a way that directly interacts with the respective environment, allowing them to improve the policies in a trial-and-error way.  
However, such direct interactions with real environments can be expensive or even impossible in other real-world applications, \eg, education~\citep{mandel2014offline}, health and medicine~\citep{murphy2001marginal,gottesman2019guidelines}, conversational AI~\citep{ghandeharioun2019approximating} and recommendation systems~\citep{chen2019top}. Instead, we are often given a collection of logged experiences generated by potentially multiple and possibly unknown policies in the past. 

This lack of access to real-time interactions with an environment led to the field of \emph{offline reinforcement learning}~\citep{levine2020offline}. Within this, {offline policy evaluation}~(OPE) focuses on evaluating a policy, and {offline policy optimization}~(OPO) focuses on improving policies; both rely only upon the given fixed past experiences without any further interactions. 
OPE and OPO are in general notoriously difficult, as unbiased estimators of the policy value can suffer from \emph{exponentially} increasing variance in terms of horizon in the worst case~\citep{li2015toward,jiang2016doubly}. 


To overcome this ``curse of horizon'' in OPE, \citep{liu2018breaking,xie2019towards} first introduced marginalized importance sampling~(MIS) based estimators. They showed that if (1) all of the logged experiences are generated from the same behavior policy, and (2) the behavior policy is known, then the exponential dependency on horizon can be improved to polynomial dependency. Subsequently, \citep{nachum2019dualdice, uehara2020minimax,yang2020off} showed that the polynomial dependency could be achieved even without assumptions (1) and (2). The basic idea for these MIS-based estimators is estimating the marginal state-action density ratio between the target policy and the empirical data, so as to adjust the distribution mismatch between them. On the algorithmic side, marginal density ratio estimation can be implemented by either plug-in estimators \citep{xie2019towards,yin2020asymptotically}, temporal-difference updates \citep{hallak2017consistent,gelada2019off}, or solving a $\min$-$\max$ optimization~\citep{nachum2019dualdice,uehara2020minimax,zhang2020gendice,yang2020off}. These OPE estimators can also be used as one component for OPO, resulting in the algorithms in~\citep{nachum2019algaedice,yin2020near,liu2020off}, which also inherit the polynomial dependency on horizon. 


In a different but related line of work, \citep{wang2020long, zhang2020reinforcement} recently showed that, for the online episodic time-homogeneous tabular Markov Decision Process (MDP) that allows for the interactions with environments, the sample complexity only has $\poly\log$ dependency on the horizon. This motivates us to consider the following question:
\begin{center}
    \emph{Can offline reinforcement learning escape from the polynomial dependency on the horizon?}
\end{center}
In this paper, we provide an affirmative answer to this question.
Specifically, considering the episodic time-homogeneous tabular MDP with $S$ states, $A$ actions and horizon $H$, or the linear MDP with anchor points and feature dimension $d$, assuming the total reward for any episode is upper bounded by $1$ almost surely, we obtain the following nearly $H$-free bounds:
\begin{itemize}[leftmargin=*]
    \item For offline policy evaluation (OPE), we show that the plug-in estimator has a \emph{finite-sample} error of $\widetilde{O}\left(\sqrt{\frac{1}{Kd_m}}\right)$ (Theorem \ref{thm:evaluation_error} and \ref{thm:linear_evaluation_error}), where $K$ is the number of episodes and $d_m$ is the minimum visiting probability of (anchor) state-action pairs, that matches the lower bound up to logarithmic factors. We emphasize that the bound has no additional $\poly\left(H, S, A, d\right)$ dependency in the higher order term, unlike the known results of \citep{yin2020asymptotically,yin2020near}. 
    \item For offline policy optimization (OPO), we show that the policy obtained by model-based planning on empirical MDP has a sub-optimality gap of $\widetilde{O}\left(\sqrt{\frac{1}{Kd_m}} + \frac{\min\{S, d\}}{Kd_m}\right)$ (Theorem \ref{thm:improvement_error} and \ref{thm:linear_improvement_error}), 
    which matches the lower bound up to logarithmic factors and a high-order term. This also improves upon the best known result from \citep{cui2020plug} by removing additional $\poly\left(H, S, d\right)$ factors in the main term.
\end{itemize}
To the best of our knowledge, these are the \emph{first} set of nearly horizon-free bounds for both OPE and OPO on time-homogeneous tabular MDP and linear MDP with anchor points. To achieve such sharp bounds, we propose a novel recursion based method to bound a ``total variance'' term (introduced below) that is broadly emerged in the offline reinforcement learning, which could be of individual interest. 

\paragraph{Technique Overview.}
With a sequence of fairly standard steps in the literature, we can bound the error of the plug-in estimator via terms related to the square root of the ``total variance'' (also known as the Cramer-Rao type lower bound illustrated in \citep{jiang2016doubly}) $\sqrt{\sum_{h\in [H]}\sum_{s, a} {\xi}_h^\pi(s, a) \text{Var}_{P(s, a)}\left(V_{h+1}^\pi(s^\prime)\right)}$ where $\xi^\pi$ is the reaching probability, $P$ is the transition and $V^\pi$ is the value function under policy $\pi$. (For a more formal definition, see Section \ref{sec:setup}.). An improper bound of this term will introduce unnecessary dependency on the horizon, either in the main term or in the higher-order term. We instead bound this term with a recursive method, by observing that this ``total variance'' term can be approximately upper bounded by the square root of the ``total variance of value square'' $\sum_{h\in [H]}\sum_{s, a} {\xi}_h^\pi(s, a) \text{Var}_{P(s, a)}\left[\left(V_{h+1}^\pi(s^\prime)\right)^2\right]$ and some remaining term (see Lemma \ref{lem:recursion} and Lemma \ref{lem:recursion_optimization} for the detail). Applying this argument recursively, we can finally obtain a $\poly\log H$ upper bound on the total variance, which eventually gets rid of the polynomial dependency on $H$.

The idea of higher order expansion has been investigated in \citep{li2020breaking, zhang2020reinforcement}. We notice that the recursion introduced in \citep{li2020breaking} was designed for the infinite horizon setting, and how to generalize their technique to finite horizon setting is still unclear. Our recursion is conceptually more similar to the recursion in \citep{zhang2020reinforcement}. However, \citep{zhang2020reinforcement} considered the online setting, where we only need to bound the error on the visited state-action pairs. For the offline setting, we need to bound the error on every state-action pair that can be touched by the policy $\pi$. This introduces the reaching probability $\xi^\pi$ in the ``total variance'' term, which we need to deal with using the MDP structure. As a result, our recursion is significantly different from their counterpart, especially in the case of linear MDP. 

\paragraph{Organization.}
Our paper is organized as follows: in Section \ref{sec:related_work}, we review the related literature on offline reinforcement learning, and then briefly introduce the problem we consider in Section \ref{sec:setup}. In Section \ref{sec:OPE} and Section \ref{sec:OPO}, we show our results of offline policy evaluation and offline policy optimization on tabular MDP correspondingly, and in Section \ref{sec:linear}, we show how to generalize our results to linear MDP with anchor points. We finally conclude and discuss our results in Section \ref{sec:conclusion}.

\section{Related Work}
\label{sec:related_work}
In this section, we briefly discuss the related literature in three categories, \ie, offline policy evaluation, offline policy optimization, and horizon-free online reinforcement learning. Notice that, for the setting that assumes an additional generative model, typical model-based algorithms first query equal number of data from each state-action pair, then perform offline policy evaluation/optimization based on the queried data. Thus we view the reinforcement learning with generative model as a special instance of offline reinforcement learning. To make the comparison fair, for method and analysis that do not assume Assumption \ref{assump:bounded_total_reward}, we scale the error and sample complexity, by assuming per-step reward is upper bounded by $1-\gamma$ and $1/H$ under infinite-horizon and finite-horizon setting correspondingly.

\begin{table*}[h]
\centering
\caption{A comparison of existing offline policy evaluation results. The sample complexity in infinite horizon setting is the number of queries of transitions we need while in episodic setting is the number of episodes we need. If Non-Uniform Reward, the MDP we consider satisfies Assumption \ref{assump:bounded_total_reward}; otherwise, we assume the per-step reward is upper bounded by $1-\gamma$ and $1/H$ correspondingly.
}
\label{tab:evaluation_comparison}
\scalebox{0.85}{
\begin{tabular}{c|c|c|c}
    \toprule
    Analysis &  Setting & Non-Uniform Reward & Sample Complexity\\
    \midrule
    \citep{li2020breaking} & Infinite Horizon Tabular& Yes & $\widetilde{O}\left(\frac{1}{d_m (1-\gamma)\epsilon^2}\right)$\\
    \citep{pananjady2020instance} & Infinite Horizon Tabular & Yes & $\Omega\left(\frac{1}{d_m(1-\gamma) \epsilon^2}\right)$\\
    \citep{yin2020asymptotically} & Finite Horizon time-inhomogeneous Tabular  & No & $\widetilde{O}\left(\frac{1}{d_m \epsilon^2} + \frac{\sqrt{SA}}{d_m \epsilon}\right)$\\
    \citep{jiang2016doubly} & Finite-Horizon time-inhomogeneous Tabular & No & $\Omega\left(\frac{1}{d_m \epsilon^2}\right)$\\
    This work & Finite Horizon time-homogeneous Tabular/Linear& Yes & $\widetilde{O}\left(\frac{1}{d_m \epsilon^2}\right)$\\
    Lower Bound & Finite Horizon time-homogeneous Tabular/Linear& Yes & $\Omega\left(\frac{1}{d_m \epsilon^2}\right)$\\
    \bottomrule
\end{tabular}}
\end{table*}


\paragraph{Offline Policy Evaluation.} For OPE in infinite horizon tabular MDP, \citep{li2020breaking} showed that plug-in estimator can achieve the error of $\widetilde{O}\left(\sqrt{\frac{1}{d_m(1-\gamma)}}\right)$ under Assumption \ref{assump:bounded_total_reward}, which matches the lower bound in~\citep{pananjady2020instance} up to logarithmic factors.
For OPE in finite horizon time-inhomogeneous tabular MDP, \citep{yin2020asymptotically, yin2020near} provided an error bound of $\widetilde{O}\left(\sqrt{\frac{1}{Kd_m}} + \frac{\sqrt{SA}}{Kd_m}\right)$ under the uniform reward assumption, which matches the lower bound \citep{jiang2016doubly} up to logarithmic factors and an additional higher-order term. We here consider the time-homogeneous MDP, and obtain an error bound of $\tilde{O}(\sqrt{\frac{1}{Kd_m}})$, that does not have the additional $\sqrt{SA}$ in higher-order term, which is different from \citep{yin2020asymptotically, yin2020near}.

Beyond the tabular setting, \citep{duan2020minimax} considered the performance of plug-in estimator with linear function approximation under the assumption of linear MDP without anchor points, and \citep{kallus2019efficiently, kallus2020double} provided more detailed analyses on the statistical properties of different kinds of estimators under different assumptions, which are not directly comparable to our work. Recently, there are also works  \citep[e.g.][]{feng2020accountable} focusing on the interval estimation of the policy for practical application.

\begin{table*}[h]
    \centering
    \caption{A comparison of existing offline learning results. The sample complexity in infinite horizon setting is the number of queries of transitions we need while in episodic setting is the number of episodes we need. If Non-Uniform Reward, the MDP we consider satisfies Assumption \ref{assump:bounded_total_reward}; otherwise, we assume the per-step reward is upper bounded by $1-\gamma$ and $1/H$ correspondingly.
    }
    \label{tab:improvement_comparison}
    \scalebox{0.85}{
    \begin{tabular}{c|c|c|c}
    \toprule
    Analysis & Setting & Non-Uniform Reward & Sample Complexity \\
    \midrule
    \citep{agarwal2020model} & Infinite Horizon Tabular& No & $\widetilde{O}\left(\frac{1}{d_m(1-\gamma) \epsilon^2}\right)$\\
    \citep{li2020breaking} & Infinite Horizon  Tabular& Yes & $\widetilde{O}\left(\frac{1}{d_m(1-\gamma) \epsilon^2}\right)$\\
    \citep{yin2020near} & Finite Horizon time-inhomogeneous Tabular & No & $\widetilde{O}\left(\frac{H}{d_m \epsilon^2}\right)$ \\
    \citep{cui2020plug} & Finite Horizon time-homogeneous Linear & No & $\widetilde{O}\left(\frac{  \min\{H, S, d\}}{d_m\epsilon^2}\right)$\\
    \citep{zhang2020reinforcement} & Finite Horizon time-homogeneous Tabular Online & Yes & $\widetilde{O}\left(\frac{SA}{\epsilon^2} + \frac{S^2A}{\epsilon}\right)$\\
    This Work & Finite Horizon time-homogeneous Tabular/Linear  & Yes & $\widetilde{O}\left(\frac{1}{d_m \epsilon^2} + \frac{\min\{S, d\}}{d_m \epsilon}\right)$ \\
    Lower Bound & Finite Horizon time-homogeneous Tabular/Linear & Yes & $\Omega\left(\frac{1}{d_m \epsilon^2}\right)$\\
    \bottomrule
    \end{tabular}}
\end{table*}

\paragraph{Offline Policy Optimization.} Offline policy optimization for infinite horizon MDP can date back to \citep{azar2013minimax}. \citep{li2020breaking} recently showed that a perturbed version of model-based planning can find $\epsilon$-optimal policy within $\widetilde{O}\left(\frac{1}{d_m (1-\gamma) \epsilon^2}\right)$ queries of transitions in infinite horizon tabular MDP when the total reward is upper bounded by $1$, that matches the lower bound up to logarithmic factors. For the finite horizon time-inhomogeneous tabular MDP setting, \citep{yin2020near} showed that model-based planning can identify $\epsilon$-optimal policy with $\widetilde{O}\left(\frac{H}{d_m \epsilon^2}\right)$ episodes, that matches the lower bound for time-inhomogeneous MDP up to logarithmic factors. When it comes to finite horizon time-homogeneous tabular MDP and linrar MDP with anchor points, \citep{cui2020plug} provided a $\widetilde{O}\left(\frac{\min\{H, S, d\}}{d_m\epsilon^2}\right)$ episode complexity for model-based planning, which is $\min\{H, S, d\}$ away from the lower bound. We provide a $\widetilde{O}\left(\frac{1}{d_m\epsilon^2} + \frac{\min\{S, d\}}{d_m \epsilon}\right)$ episode complexity, that matches the lower bound up to logarithm factors and a higher order term.

A recent work \citep{yin2021nearoptimal} considered solving the offline policy optimization with model-free $Q$-learning with variance reduction. Although their algorithm can match the sample complexity lower bound $\Omega\left(\frac{H^2}{d_m \epsilon^2}\right)$ when per-step reward is upper bounded by $1$, model-free algorithms generally need at least $\Omega(\frac{H^2}{d_m})$ episodes of data to finish the algorithm (a.k.a the sample size barrier in \citep{li2020breaking}) and it is unclear how to translate their results to the setting with total reward upper bounded by $1$.


\paragraph{Horizon-Free Online Reinforcement Learning.}
There are several works obtained nearly horizon-free sample complexity bounds for online reinforcement learning.
In the time-homogeneous setting, whether the sample complexity needs to scale polynomially with $H$ was an open problem raised by \citep{jiang2018open}.
The problem was first addressed by \citep{wang2020long} who proposed an $\epsilon$-net over the optimal policies and a simulation-based algorithms to obtain a sample complexity that only scales logarithmically with $H$, though their dependency on $S,A$ and $\epsilon$ is not optimal and their algorithm is not computationally efficient.
The sample complexity bound was later substantially improved by \citep{zhang2020reinforcement} who obtained an $\widetilde{O}\left(\frac{SA}{\epsilon^2}+\frac{S^2A}{\epsilon}\right)$ bound, which nearly matches the contextual bandits (tabular MDP with $H=1$) lower bound $\Omega\left(\frac{SA}{\epsilon^2}\right)$ up to an $\frac{S^2A}{\epsilon}$ factor \citep{lattimore2020bandit}.
Their key ideas are (1) a new bonus function and (2) a recursion-based approach to obtain a tight bound on the sample complexity.
Such kind of recursion-based approach cannot be directly applied to the offline setting, and we develop a novel recursion method to suit the offline scenarios.


\section{Problem Setup}
\label{sec:setup}

\paragraph{Notation:} Throughout this paper, we use $[N]$ to denote the set $\{1, 2, \cdots, N\}$ for $N\in\mathbb{Z}^{+}$, $\Delta(E)$ to denote the set of the probability measure over the event set $E$. Moreover, for simplicity, we use $\iota$ to denote $\mathrm{polylog}\left(S, A, H, d, 1/\delta\right)$ factors (that can be changed in the context), where $\delta$ is the failure probability. We use $\widetilde{O}$ and $\widetilde{\Omega}$ to denote the upper bound and lower bound up to logarithm factors.

\subsection{Markov Decision Process}

Markov Decision Process (MDP) is one of the most standard models studied in the reinforcement learning, usually denoted as $\mathcal{M} = (\mathcal{S}, \mathcal{A}, R, P, \mu)$, where $\mathcal{S}$ is the state space, $\mathcal{A}$ is the action space, $R:\mathcal{S} \times \mathcal{A} \to \Delta(\mathbb{R}^{+})$ is the reward, $P:\mathcal{S} \times \mathcal{A} \to \Delta(\mathcal{S})$ is the transition, and $\mu$ is the initial state distribution. We additionally define $r:\mathcal{S}\times\mathcal{A}\to \mathbb{R}^{+}$ to denote the expected reward.

We focus on the episodic MDP with the horizon\footnote{A common belief is that we can always reproduce the results between episodic time-homogeneous setting and infinite horizon setting via substitute the horizon $H$ in episodic setting with the ``effective horizon'' $\frac{1}{1-\gamma}$ in episodic setting. However, this argument does not always hold, for example the dependency decouple technique used in \citep{agarwal2020model, li2020breaking} cannot be directly applied in the episodic setting.} $H \in \mathbb{Z}^{+}$ and time-homogeneous setting\footnote{Some previous work consider time-inhomogeneous setting \citep[e.g.][]{jin2018qlearning}, where $P$ and $R$ can be varied for different $h\in [H]$. It is noteworthy that we need an additional $H$ factor in the sample complexity to identify $\epsilon$-optimal policy for time-inhomogeneous MDP compared with time-homogeneous MDP. Transforming the improvement analysis from time-homogeneous setting to time-inhomogeneous setting is trivial (we only need to replace $S$ with $HS$), but not vice-versa, as the analysis for time-inhomogeneous setting probably do not exploit the invariant transition sufficiently.} that $P$ and $R$ do not depend on the level $h\in [H]$.
A (potentially non-stationary) policy $\pi$ is defined as $\pi = (\pi_1, \pi_2, \cdots \pi_H)$, where $\pi_h : \mathcal{S} \to \Delta(\mathcal{A})$, $\forall h\in [H]$. Following the standard definition, we define the value function 
$
    V^{\pi}_h(s) := \mathbb{E}_{P, \pi}[\sum_{t=h}^H R(s_t, a_t) | s_h = s]
$
and the action-value function (i.e. the Q-function) 
$
    Q^{\pi}_h(s, a) :=\mathbb{E}_{P, \pi} [\sum_{t=h}^H R(s_t, a_t) | (s_h, a_h) = (s, a)],
$
which are the expected cumulative rewards under the transition $P$ and policy $\pi$ starting from $s_h = s$ and $(s_h, a_h) = (s, a)$. 
Notice that, even though $P$ and $R$ keep invariant under the change of $h$, $V$ and $Q$ always depend on $h$ in the episodic setting, which introduces additional technical difficulties compared with the infinite horizon setting.

The expected cumulative reward under policy $\pi$ is defined as:
$
    v^\pi := \mathbb{E}_{\mu}\left[V_1^{\pi}(s)\right],
$
and our ultimate goal is finding the optimal policy $\pi^*$ of $\mathcal{M}$, which can be written as:
$
    \pi^* = \mathop{\arg\max}_{\pi} \left(v^\pi\right).
$
We additionally define the reaching probabilities
$    \xi_h^{\pi}(s) = \mathbb{P}_{\mu, P, \pi}(s_h = s),
    \xi_h^{\pi}(s, a) = \mathbb{P}_{\mu, P, \pi}(s_h = s, a_h = a),
$
that represent the probability of reaching state $s$ and state-action pair $(s, a)$ at time step $h$. Obviously we have
$    \sum_{s} \xi_h^{\pi}(s) = 1,
    \sum_{s, a} \xi_h^{\pi}(s, a) = 1, \forall h\in [H],
$
and we also have the following relations between $\xi_h^{\pi}(s)$ and $\xi_h^\pi(s, a)$:
$    \xi_h^{\pi}(s, a) = \xi_h^{\pi}(s) \pi(a|s), \,\, 
    \xi_{h+1}^{\pi}(s^\prime) = \sum_{s, a} \xi_{h}^{\pi}(s, a) P(s^\prime|s, a).
$
With $\xi_h^\pi$ at hand, we can write $v^\pi$ in an equivalent way:
$
    v^\pi = \sum_{s, a} \left(\left(\sum_{h\in [H]} \xi_h^\pi(s, a)\right) r(s, a)\right),
    \label{equ:dual-perspective}
$
which provides a dual perspective on the policy evaluation~\citep{yang2020off}. 

\subsection{Offline Reinforcement Learning}
Generally, $P$ and $r$ are not revealed to the learner, which means we can only learn about $\mathcal{M}$ and identify the optimal policy $\pi^*$ with data from different kinds of sources. In offline reinforcement learning, the learner can only have access to a collection of data $\mathcal{D} = \{(s_i, a_i, r_i, s_i^\prime)\}_{i\in [n]}$ where $r_i \sim R(s_i, a_i)$ and $s_i^\prime \sim P(\cdot|s_i, a_i)$, that is collected in $K$ episodes with (known or unknown) behavior policy (so that $n = KH$), 
For simplicity, define $n(s, a)$ as the number of data that $(s_i, a_i) = (s, a)$, while $n(s, a, s^\prime)$ is the number of data that $(s_i, a_i, s_i^\prime) = (s, a, s^\prime)$.

With $\mathcal{D}$, the learner is generally asked to do two kinds of tasks. The first one is the offline policy evaluation (a.k.a off-policy evaluation), that aims to estimate $v^\pi$ for the given $\pi$. The second one is the offline policy optimization, that aims to find the $\hat{\pi}^{*}$ that can perform well on $\mathcal{M}$. We are interested in the statistical limit due to the limited number of data, and how to approach this statistical limit with simple and computationally efficient algorithms.



\section{Offline Policy Evaluation}
\label{sec:OPE}

In this section, we first consider the offline policy evaluation for tabular MDP with number of state $S = |\mathcal{S}| < \infty$, number of action $A = |\mathcal{A}| < \infty$, which is the basis for the more general settings. We first introduce the plug-in estimator we consider, which is equivalent to different kinds of estimators that are widely used in practice. Then we describe the assumptions we use, show the error bound of the plug-in estimator, and provide the proof sketch of the error bound.

\vspace{-2mm}
\subsection{The Plug-in Estimator}
\vspace{-2mm}
Here we introduce the plug-in estimator. We first build the empirical MDP $\widehat{\mathcal{M}}$ with the data:
$
    \hat{P}(s^\prime | s, a) =  \frac{n(s, a, s^\prime)}{n(s, a)},
    \hat{r}(s, a) =  \frac{\sum_{i\in [n]} r_i \mathbf{1}_{(s_i, a_i) = (s, a)}}{n(s, a)},
$
where $\mathbf{1}$ is the indicator function. Then we correspondingly define $\hat{Q}_h^\pi$, $\hat{V}_h^\pi$ and finally the estimator $\hat{v}^\pi$, $\forall h\in [H]$, by substituting the $P$ and $r$ in $Q_h^\pi$, $V_h^\pi$ and $v^\pi$ with $\hat{P}$ and $\hat{r}$. Such computation can be efficiently implemented with dynamic programming.
We also introduce the reaching probabilities 
$    \hat{\xi}_h^{\pi}(s) = \mathbb{P}_{\mu, \hat{P}, \pi}(s_h = s), 
    \hat{\xi}_h^{\pi}(s, a) = \mathbb{P}_{\mu, \hat{P}, \pi}(s_h= s, a_h = a)
$ in the empirical MDP $\widehat{\mathcal{M}}$, which will be helpful in our analysis.

The plug-in estimator has been studied in \citep{duan2020minimax} under the assumption of linear transition. It's known that the plug-in estimator is equivalent to the MIS estimator proposed in \citep{yin2020asymptotically} and a certain version of DualDICE estimator with batch update is proposed in \citep{nachum2019dualdice}, due to the observation that
$   
    \hat{v}^\pi = \sum_{s, a}\left(\left(\sum_{h\in [H]}\hat{\xi}_h^\pi(s, a)\right)\hat{r}(s, a) \right).
$

\vspace{-2mm}
\subsection{Theoretical Guarantee}
\vspace{-2mm}

Here we first summarize the assumptions we use for the tabular MDP.
\begin{assumption}[Bounded Total Reward]
\label{assump:bounded_total_reward}
$\forall \pi$, we have $\sum_{h\in [H]} r_h \leq 1$ almost surely, where $s_1 \sim \mu$, $a_h \sim \pi(\cdot|s_h)$, $r_h \sim R(s_h, a_h)$ and $s_{h+1} \sim P(\cdot|s_h, a_h)$, $\forall h\in [H]$. This also means $\mathbb{P}(r\sim R(s, a)|r>1) = 0$, $\forall (s, a)$.
\vspace{-0.5em}\end{assumption}
This is the key assumption used in \citep{wang2020long, zhang2020reinforcement} to escape the polynomial dependence of horizon in episodic setting. As discussed in \cite{jiang2018open,wang2020long,zhang2020reinforcement}, this assumption is also more general than the \emph{uniformly bounded reward} assumption: $\forall (s, a), r(s, a)\leq 1/H$. Thus, all of the results in this paper can be generalized to the uniformly bounded reward with a proper scaling of the bounded total reward.

\begin{assumption}[Data Coverage]
\label{assump:data_coverage}
$\forall (s, a)$, $n(s, a) \geq n d_m$. 
\vspace{-0.5em}
\end{assumption}
This assumption has been used in \citep{yin2020asymptotically, yin2020near} and is similar to \emph{concentration coefficient} assumption originated from \citep{munos2003error}. Intuitively, the performance of the offline reinforcement learning should depend on $d_m$, since the state-action pair with less visitation will introduce more uncertainty.

Notice that, $d_m \in \left(0, (SA)^{-1}\right]$. Assuming access to the generative model, we can query equal number of samples from each state action pair, where $d_m = (SA)^{-1}$. For the offline data sampled with a fixed behavior policy $\pi_{\mathrm{BEH}}$, we can view
$    
d_m \approx \frac{1}{H}\min_{s, a} \sum_{h\in [H]} \xi_h^{\pi_{\mathrm{BEH}}}(s, a),
$
which measures the quality of exploration for $\pi_{\mathrm{BEH}}$. When the number of episodes $K = \widetilde{\Omega}\left(1/d_m\right)$, by standard concentration, we know $\min_{s, a}n(s, a) \approx nd_m = H K d_m$.

We remark that, some of the recent \citet{liu2019off, yin2021nearoptimal} define the data coverage via the coverage on the visitation of the optimal policy when considering offline policy optimization. This kind of data coverage can be covered by our Assumption \ref{assump:data_coverage}, by considering the sub-MDP which only consists of the state-action pair that can be visited by the optimal policy. As we already know that the optimal policy will not leave this sub-MDP, any near-optimal policy on this sub-MDP will be near-optimal on the original MDP, and hence our results can be directly applied under this alternative definition of data coverage.

With these assumptions at hand, we can present our main results.

\begin{theorem}
\label{thm:evaluation_error}
Under Assumption \ref{assump:bounded_total_reward} and Assumption \ref{assump:data_coverage}, suppose $K = \widetilde{\Omega}\left(1/d_m\right)$, then
\vspace{-1mm}
\begin{align*}
\textstyle
\left|v^\pi - \hat{v}^\pi\right| \leq \sqrt{\frac{\iota}{Kd_m}}
\vspace{-2mm}
\end{align*}
holds with probability at least $1-\delta$, where $K$ is the number of episodes, $d_m$ is the minimum visiting probability and $\iota$ absorbs the $\poly\log$ factors.
\end{theorem}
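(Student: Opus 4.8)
\medskip\noindent\textbf{Proof plan.}\ The plan is to reduce $\big|v^{\pi}-\hat{v}^{\pi}\big|$ to a ``total variance'' quantity via a value-difference identity and Bernstein's inequality, and then to strip the horizon dependence of that quantity using the recursion of Lemma~\ref{lem:recursion}. Unrolling the Bellman equations of $\mathcal{M}$ and $\widehat{\mathcal{M}}$ under the common, \emph{fixed} policy $\pi$ and taking the expectation over $\mu$ yields the simulation-lemma identity
\[
v^{\pi}-\hat{v}^{\pi}=\sum_{h\in[H]}\sum_{s,a}\hat{\xi}_h^{\pi}(s,a)\Big[(r-\hat{r})(s,a)+\sum_{s'}\big(P(s'|s,a)-\hat{P}(s'|s,a)\big)V_{h+1}^{\pi}(s')\Big],
\]
where $\hat{\xi}_h^{\pi}$ (rather than $\xi_h^{\pi}$) appears because the induced walk runs under $\hat{P}$. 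The structural point is that, $\pi$ being fixed, $V_{h+1}^{\pi}$ is a \emph{deterministic} vector, so $\sum_{s'}(P-\hat{P})(s'|s,a)V_{h+1}^{\pi}(s')$ is a deviation of $\hat{P}$ tested against a fixed vector and needs no covering over value functions or policies --- this is exactly what makes OPE cleaner than OPO and why no $\min\{S,d\}/(Kd_m)$ term shows up here.

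Next, Assumption~\ref{assump:bounded_total_reward} supplies the \emph{horizon-free} range bounds $0\le r(s,a)\le1$ and $0\le V_{h+1}^{\pi}(s')\le1$ (on every $(s,a,h)$ reachable under $\pi$, which suffices because $\hat{P}$ is supported within $\mathrm{supp}(P)$), together with $\mathrm{Var}_{R(s,a)}(r)\le r(s,a)$. A union bound of Bernstein's inequality over the at most $SAH$ triples $(s,a,h)$ gives, with probability at least $1-\delta$,
\[
\big|(r-\hat{r})(s,a)\big|=O\!\Big(\sqrt{\tfrac{r(s,a)\,\iota}{n(s,a)}}+\tfrac{\iota}{n(s,a)}\Big),\qquad\Big|\sum_{s'}(P-\hat{P})(s'|s,a)V_{h+1}^{\pi}(s')\Big|=O\!\Big(\sqrt{\tfrac{\mathrm{Var}_{P(s,a)}(V_{h+1}^{\pi})\,\iota}{n(s,a)}}+\tfrac{\iota}{n(s,a)}\Big).
\]
Plugging these into the identity, using $n(s,a)\ge nd_m=HKd_m$ (Assumption~\ref{assump:data_coverage}), $\sum_{s,a}\hat{\xi}_h^{\pi}(s,a)=1$, and Cauchy--Schwarz over $(h,s,a)$, the ``$\iota/n(s,a)$'' pieces sum to $O(H\iota/(nd_m))=O(\iota/(Kd_m))$, which is $\le\sqrt{\iota/(Kd_m)}$ precisely because $K=\widetilde{\Omega}(1/d_m)$; the Bernstein ``variance'' pieces give
\[
\big|v^{\pi}-\hat{v}^{\pi}\big|=O\!\Big(\sqrt{\tfrac{\iota}{Kd_m}}\,\big(\sqrt{\textstyle\sum_{h,s,a}\hat{\xi}_h^{\pi}(s,a)r(s,a)}+\sqrt{\mathcal{T}}\,\big)+\tfrac{\iota}{Kd_m}\Big),\quad\mathcal{T}:=\sum_{h,s,a}\hat{\xi}_h^{\pi}(s,a)\,\mathrm{Var}_{P(s,a)}\big(V_{h+1}^{\pi}(s')\big).
\]

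It remains to show $\mathcal{T}=O(\poly\log(H)\cdot\iota)$ and $\sum_{h,s,a}\hat{\xi}_h^{\pi}(s,a)r(s,a)=O(\poly\log(H)\cdot\iota)$; this is the crux. With the \emph{true} reaching probabilities $\xi^{\pi}$ in place of $\hat{\xi}^{\pi}$ this would be routine: from $\mathrm{Var}_{P(s,a)}(V_{h+1}^{\pi})=\mathbb{E}_{P(s,a)}[(V_{h+1}^{\pi})^2]-(Q_h^{\pi}(s,a)-r(s,a))^2$, the inequality $(Q_h^{\pi}-r)^2\ge(Q_h^{\pi})^2-2rQ_h^{\pi}$, and Jensen $\big(\sum_a\pi(a|s)Q_h^{\pi}(s,a)^2\ge V_h^{\pi}(s)^2\big)$, the terms $\sum_{s'}\xi_{h+1}^{\pi}(s')(V_{h+1}^{\pi}(s'))^2-\sum_s\xi_h^{\pi}(s)(V_h^{\pi}(s))^2$ telescope over $h$ (using $V_{H+1}^{\pi}\equiv0$), leaving $\mathcal{T}\le2\sum_{h,s,a}\xi_h^{\pi}(s,a)r(s,a)Q_h^{\pi}(s,a)\le2v^{\pi}\le2$ and $\sum_{h,s,a}\xi_h^{\pi}(s,a)r(s,a)=v^{\pi}\le1$ --- the law of total variance. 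The obstacle is that we are stuck with the \emph{empirical} $\hat{\xi}^{\pi}$, for which the telescoping breaks (the transitions driving $\hat{\xi}^{\pi}$ are $\hat{P}$, not the $P$ inside the variance); replacing $\hat{\xi}^{\pi}$ by $\xi^{\pi}$ directly would propagate $\|\hat{P}-P\|$ through $H$ layers and resurrect a $\poly(H)$ factor, while bootstrapping through $\hat{v}^{\pi}$ would only be affordable under $K=\widetilde{\Omega}(H/d_m)$, which we are not allowed to assume. This is precisely where Lemma~\ref{lem:recursion} enters: up to a genuinely $H$-free additive remainder, $\mathcal{T}$ (a ``total variance of $V^{\pi}$'') is bounded by a constant times the square root of the analogous ``total variance of $(V^{\pi})^2$'', and iterating this $O(\log H)$ times --- the exponents doubling while the iterated value functions stay in $[0,1]$ --- collapses the chain to $\mathcal{T}=O(\poly\log(H)\cdot\iota)$; the same device controls $\sum_{h,s,a}\hat{\xi}_h^{\pi}(s,a)r(s,a)$. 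Substituting both bounds back into the previous display and absorbing the $\iota/(Kd_m)\le\sqrt{\iota/(Kd_m)}$ remainder via $K=\widetilde{\Omega}(1/d_m)$ gives $\big|v^{\pi}-\hat{v}^{\pi}\big|\le\sqrt{\iota/(Kd_m)}$ after redefining $\iota$ to absorb the constants and the $\poly\log(H)$ factors, which is Theorem~\ref{thm:evaluation_error}.

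I expect this third step to be the main obstacle: horizon-freeness hinges entirely on engineering the recursion so that \emph{every} additive remainder it produces --- in particular those incurred when one trades $\hat{P}$ for $P$ and $\hat{V}^{\pi}$ for $V^{\pi}$ inside the iterated variances, and when one relates $\hat{\xi}^{\pi}$ to $\xi^{\pi}$ --- carries no $\poly(H)$ dependence. This is considerably more delicate in the offline setting, where every state-action pair touched by $\pi$ must be controlled through the reaching probabilities, than in the online analysis of \citep{zhang2020reinforcement} where only visited state-action pairs enter.
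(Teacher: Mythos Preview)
Your proposal is correct and follows the paper's route: the value-difference identity (Lemma~\ref{lem:value_difference}), Bernstein plus Cauchy--Schwarz to isolate the total variance $\mathcal{T}$, then the recursion of Lemma~\ref{lem:recursion} (the paper phrases it as a recursion on $\Delta_1(i):=\big|\sum_{h,s,a}\hat{\xi}_h^{\pi}(s,a)\sum_{s'}(P-\hat{P})(s'|s,a)(V_{h+1}^{\pi}(s'))^{2^i}\big|$, yielding $\Delta_1(i)\le\sqrt{\tfrac{\iota}{Kd_m}(\Delta_1(i+1)+2^{i+1})}+\tfrac{\iota}{Kd_m}$, which is equivalent to your formulation after one Bernstein step). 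One small correction: the bound $\sum_{h,s,a}\hat{\xi}_h^{\pi}(s,a)\,r(s,a)\le1$ is not obtained by ``the same device'' (the recursion) but directly from Assumption~\ref{assump:bounded_total_reward} via the support inclusion $\mathrm{supp}(\hat{P})\subseteq\mathrm{supp}(P)$ you already invoked --- and it is precisely this direct bound that makes the additive remainder in Lemma~\ref{lem:recursion} equal to $2^{i+1}$ rather than $2^{i+1}H$, so it is a \emph{prerequisite} for the recursion, not a consequence of it.
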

\vspace{-0.5em}
To demonstrate the tightness of our upper bound, we provide a minimax lower bound in Theorem \ref{thm:lower_bound_evaluation}\footnote{To the best of our knowledge, no lower bound has been provided for finite horizon time-homogeneous setting, so here we provide a minimax lower bound, and the proof can be found in Appendix \ref{sec:lower_bound}.}.
\begin{theorem}
\label{thm:lower_bound_evaluation}
There exists a pair of MDPs $\mathcal{M}_1$ and $\mathcal{M}_2$, and offline data $\mathcal{D}$ with $|\mathcal{D}| = KH$ and minimum visiting probability $d_m$, such that for some absolute constant $c_0$, we have
\begin{align*}
    \textstyle
    \inf_{\hat{v}^\pi}\sup_{\mathcal{M}_i \in \{\mathcal{M}_1, \mathcal{M}_2\}} \mathbb{P}_{\mathcal{M}_i}\left(\left|\hat{v}^\pi(\mathcal{D}) - v^{\pi}\right| > \frac{c_0}{\sqrt{Kd_m}}\right) > 0.25,
\end{align*}
where $\hat{v}^\pi$ is any estimator that takes $\mathcal{D}$ as input.
\end{theorem}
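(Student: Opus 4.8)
The plan is a two-point (Le Cam) argument. I will construct two time-homogeneous MDPs $\mathcal{M}_1,\mathcal{M}_2$ on a fixed $O(1)$-size state--action space, a single non-stationary target policy $\pi$, and a behavior policy generating $\mathcal{D}=\{(s_i,a_i,r_i,s_i')\}_{i\in[KH]}$ whose minimum time-averaged occupancy equals $d_m$, arranged so that (i) the value gap satisfies $\bigl|v^\pi_{\mathcal{M}_1}-v^\pi_{\mathcal{M}_2}\bigr|=\Theta(1/\sqrt{Kd_m})$, and (ii) the laws of $\mathcal{D}$ under the two instances have Kullback--Leibler divergence at most a small absolute constant. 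Given (i) and (ii), the standard two-point inequality shows that any estimator $\hat v^\pi(\mathcal{D})$ must, on one of the two instances, err by at least half the value gap with probability at least $\tfrac12\bigl(1-\mathrm{TV}(\mathrm{Law}_1,\mathrm{Law}_2)\bigr)$, and Pinsker's inequality turns the KL bound into $\mathrm{TV}<\tfrac12$, so this probability is strictly above $0.25$; this yields the theorem, where $c_0$ is a small absolute constant (we work, as is implicit in the statement, in the regime $Kd_m=\Omega(1)$: otherwise $1/\sqrt{Kd_m}$ exceeds $1/2$ and, since $v^\pi\in[0,1]$, no estimator can be forced to fail).

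The hard instances use a \emph{sticky state}. Let $\mathcal{S}=\{s^\star,s_{\mathrm{dead}}\}$, $\mathcal{A}=\{a_1,a_2\}$, initial distribution $\mu=\delta_{s^\star}$, deterministic reward $1/H$ at $s^\star$ (for either action) and $0$ at the absorbing $s_{\mathrm{dead}}$; then Assumption~\ref{assump:bounded_total_reward} holds exactly, as every trajectory visits $s^\star$ at most $H$ times. From $s^\star$, each action either self-loops or drops to $s_{\mathrm{dead}}$; all transitions are identical in $\mathcal{M}_1,\mathcal{M}_2$ except the self-loop probability $p_i:=1-c_i/H$ of $a_1$ at $s^\star$, with $c_1$ a suitable fixed constant and $c_2:=c_1+\eta$, where $\eta:=c'/\sqrt{Kd_m}$ for a small absolute constant $c'$ (so $\eta<1$ and all probabilities are valid); the self-loop probability of $a_2$ at $s^\star$ is the fixed value $1-c_1/H$ in both instances, with $c_1$ chosen so that a policy staying at $s^\star$ has occupancy of $s^\star$ equal to $\approx\tfrac12$. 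The target $\pi$ plays $a_1$ everywhere, so a geometric sum gives the \emph{exact} value $v^\pi_{\mathcal{M}_i}=v(c_i)$ with $v(c):=\tfrac1H\cdot\tfrac{1-(1-c/H)^{H}}{c/H}$; since $v'(c)$ is bounded away from $0$ and from $\infty$ on $c\in[c_1,c_1+1]$ for all $H\ge2$, the mean value theorem gives $\bigl|v^\pi_{\mathcal{M}_1}-v^\pi_{\mathcal{M}_2}\bigr|=\Theta(\eta)=\Theta(1/\sqrt{Kd_m})$ (it is essential here to differentiate the exact $v(\cdot)$, not to replace each $v^\pi_{\mathcal{M}_i}$ by $\tfrac{1-e^{-c_i}}{c_i}$, whose $O(1/H)$ discretization errors would not cancel).

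For the data, let the behavior policy play $a_1$ at $s^\star$ with probability $\rho$ and $a_2$ with probability $1-\rho$, and act uniformly at $s_{\mathrm{dead}}$, with $\rho$ tuned so that $\tfrac1H\sum_h\xi_h^{\pi_{\mathrm{BEH}}}(s^\star,a_1)=d_m$; since the occupancy of $s^\star$ is $\approx\tfrac12$, this makes $\rho\approx2d_m$, and the four state--action occupancies are $\{d_m,\ \approx\tfrac12-d_m,\ \approx\tfrac14,\ \approx\tfrac14\}$, so (using $d_m\le\tfrac1{SA}=\tfrac14$) the minimum equals $d_m$, attained at $(s^\star,a_1)$. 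Hence $\mathbb{E}_1[n(s^\star,a_1)]=KHd_m$, and a Chernoff bound (valid since $KHd_m\gg1$) pins $n(s^\star,a_1)=\Theta(KHd_m)$ on a high-probability event, which also verifies Assumption~\ref{assump:data_coverage}. Because the two instances differ only in the single Bernoulli transition at $(s^\star,a_1)$, the chain-rule (trajectory) decomposition of KL gives $\mathrm{KL}(\mathrm{Law}_1\,\|\,\mathrm{Law}_2)=\mathbb{E}_1[n(s^\star,a_1)]\cdot\mathrm{KL}\bigl(\mathrm{Ber}(p_1)\,\|\,\mathrm{Ber}(p_2)\bigr)$; as $p_1,p_2\approx1-c_1/H$ differ by $\eta/H$, the per-sample KL is $\Theta(\eta^2/H)$, so the total is $\Theta(KHd_m)\cdot\Theta(\eta^2/H)=\Theta(Kd_m\eta^2)=\Theta(c'^2)$, as small as we like. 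Feeding the value gap and this KL bound into the two-point inequality proves the tabular case; and since a tabular MDP is a linear MDP with anchor points (every state--action pair being its own anchor, so $d=SA$), the same pair of instances proves the bound in the linear-with-anchors setting.

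The conceptual heart --- and the reason a naive construction fails --- is engineering the $\sqrt H$ amplification. A plain Bernoulli-reward gadget visited once per episode gives only a $\Theta(1/\sqrt{KHd_m})$ gap, since the pertinent sample count is $\Theta(KHd_m)$; the sticky state instead hides the uncertainty in a self-loop probability near $1$, so that $\Theta(KHd_m)$ samples can resolve the rescaled parameter $c=H(1-p)$ only to precision $\Theta(1/\sqrt{Kd_m})=\Theta(\eta)$ (the per-sample information being the tiny $\Theta(\eta^2/H)$), while an $O(1)$ shift of $c$ --- an $O(1/H)$ shift of $p$ --- changes the truncated geometric survival time at $s^\star$, hence $v^\pi$, by $\Theta(1)$. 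The remaining points are routine but need care: (a) pinning the exact relation among $d_m$, $\rho$, and $n(s^\star,a_1)$ uniformly over $d_m\in(0,\tfrac1{SA}]$, and confirming which pair is the minimizer; (b) controlling finite-$H$ discretization by using the exact value function; and (c) chasing absolute constants so the failure probability is strictly above $0.25$.
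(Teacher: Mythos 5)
Your proposal is correct and is essentially the paper's own argument: the same near-absorbing two-state gadget with self-loop probability $p_i = 1 - c_i/H$, reward $1/H$ per visit, the same exact truncated-geometric value $\frac{p-p^{H+1}}{1-p}\cdot\frac{1}{H}$, and the same Le Cam/Pinsker two-point bound with per-sample Bernoulli KL of order $\eta^2/H$ against $n(s^\star,a_1)=\Theta(KHd_m)$ samples. The only additions (an explicit behavior policy realizing the occupancy $d_m$, and the chain-rule KL on the trajectory law) are presentational; the paper simply posits $n(s_2,a)=nd_m$ directly.
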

\paragraph{Remark} Theorem \ref{thm:evaluation_error} and \ref{thm:lower_bound_evaluation} show that, even with the simplest plug-in estimator, we can match the minimax lower bound $\Omega\left(\sqrt{\frac{1}{Kd_m}}\right)$ for offline policy evaluation in time-homogeneous MDP up to logarithmic factors. As a result, we can conclude that time-homogeneous MDP is not harder than the bandits in offline policy evaluation. 

\paragraph{Remark} The assumption that $K = \widetilde{\Omega}\left(1/d_m\right)$ is a mild and necessary assumption, as with only $o\left(1/d_m\right)$ episodes, there can exist some under-explored state-action pair which unavoidably leads to constant error (this is also how we construct the hard instance for the minimax lower bound). 

\paragraph{Remark} Central to our analysis is a recursion based upper bound on the ``total variance'' term (see Lemma \ref{lem:recursion}), which enables us to have a sharper bound for the plug-in estimator compared with previous work \citep[e.g.][]{yin2020asymptotically, yin2020near} that include the unnecessary $\poly (S, A)$ factors in the higher-order term.

\vspace{-2mm}
\subsection{Proof Sketch}
\vspace{-2mm}

The detailed proof can be found in Appendix \ref{sec:proof}, and here we sketch our proof in short. With the value difference lemma (see Lemma \ref{lem:value_difference} in the Appendix \ref{sec:proof}), we need to focus on bounding the term:
\vspace{-1mm}
\begin{align*}
\resizebox{0.6\textwidth}{!}
{$
    \sum_{h\in [H]} \sum_{s, a} \hat{\xi}_h^\pi(s, a) 
    \left[\sum_{s^\prime}\left(P(s^\prime|s, a) - \hat{P}(s^\prime|s, a)\right)V_{h+1}^\pi(s^\prime)\right],
$}
\vspace{-1mm}
\end{align*}
which, by Bernstein's inequality and Cauchy-Schwartz inequality associated with the Assumption \ref{assump:bounded_total_reward} and Assumption \ref{assump:data_coverage}, can be upper bounded by the following term with high probability:
\begin{align*}
    \textstyle
    \sqrt{\frac{\iota}{Kd_m}} \sqrt{\sum_{h\in [H]}\sum_{s, a} \hat{\xi}_h^\pi(s, a)\mathrm{Var}_{P(s, a)}(V_{h+1}^{\pi}(s^\prime))} + \frac{\iota}{Kd_m}
\end{align*}
To bound the ``total variance'' term $\sum_{h\in [H]}\sum_{s, a} \hat{\xi}_h^\pi(s, a)\mathrm{Var}_{P(s, a)}[V_{h+1}^{\pi}(s^\prime)]$, we use a novel recursion-based method based on the following observation:
\begin{lemma}[Variance Recursion For Evaluation]
\label{lem:recursion}
For the value function $V_h^\pi(s)$ induced by any $\pi$, we have that
\begin{align}
    \textstyle
    & 
    \sum_{h\in [H]} \sum_{s, a} \hat{\xi}_h^\pi(s, a)  \mathrm{Var}_{P(s, a)}\left[V_{h+1}(s^\prime)^{2^i}\right] 
    \nonumber\\
    &
    \leq 
    \sum_{h\in [H]}\sum_{s, a} \hat{\xi}_h^\pi(s,a)\Bigg[\sum_{s^\prime}\left(P(s^\prime|s, a) - \hat{P}(s^\prime|s, a)\right)V_{h+1}(s^\prime)^{2^{i+1}} \Bigg] + 2^{i+1}
    \label{eq:recursion_main}
\end{align}
\end{lemma}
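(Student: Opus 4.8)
The plan is to peel the claimed inequality apart with one elementary variance identity and then telescope. Write $\mathrm{Var}_{P(s,a)}[V_{h+1}(s^\prime)^{2^i}] = \mathbb{E}_{P(s,a)}[V_{h+1}(s^\prime)^{2^{i+1}}] - \left(\mathbb{E}_{P(s,a)}[V_{h+1}(s^\prime)^{2^i}]\right)^2$ and, inside the first expectation, add and subtract $\hat{P}$: $\mathbb{E}_{P(s,a)}[V_{h+1}(s^\prime)^{2^{i+1}}] = \sum_{s^\prime}\left(P(s^\prime|s,a)-\hat{P}(s^\prime|s,a)\right)V_{h+1}(s^\prime)^{2^{i+1}} + \mathbb{E}_{\hat{P}(s,a)}[V_{h+1}(s^\prime)^{2^{i+1}}]$. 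Summing over $h,s,a$ weighted by $\hat{\xi}_h^\pi(s,a)$, the $(P-\hat{P})$ piece is \emph{precisely} the first term on the right-hand side of the lemma, so what remains is to prove that the residual $\mathrm{Res} := \sum_{h\in[H]}\sum_{s,a}\hat{\xi}_h^\pi(s,a)\left(\mathbb{E}_{\hat{P}(s,a)}[V_{h+1}(s^\prime)^{2^{i+1}}] - \left(\mathbb{E}_{P(s,a)}[V_{h+1}(s^\prime)^{2^i}]\right)^2\right)$ is at most $2^{i+1}$.

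For the positive half of $\mathrm{Res}$ I would use the flow identity in the empirical MDP, $\sum_{s,a}\hat{\xi}_h^\pi(s,a)\hat{P}(s^\prime|s,a) = \hat{\xi}_{h+1}^\pi(s^\prime)$, which collapses the inner sum to $T_{h+1} := \sum_{s^\prime}\hat{\xi}_{h+1}^\pi(s^\prime)V_{h+1}(s^\prime)^{2^{i+1}}$. For the negative half I would first apply Jensen's inequality to the convex map $x\mapsto x^{2^i}$ to get $\left(\mathbb{E}_{P(s,a)}[V_{h+1}(s^\prime)^{2^i}]\right)^2 \ge \left(\mathbb{E}_{P(s,a)}[V_{h+1}(s^\prime)]\right)^{2^{i+1}}$, then invoke the Bellman equation of the \emph{true} MDP, $\mathbb{E}_{P(s,a)}[V_{h+1}^\pi(s^\prime)] = Q_h^\pi(s,a) - r(s,a)$, then the convexity bound $(a-b)^k \ge a^k - k\,a^{k-1}b \ge a^k - kb$ valid for $0\le b\le a\le 1$ (here $a=Q_h^\pi(s,a)\in[0,1]$, $b=r(s,a)\ge 0$, $k=2^{i+1}$), and finally Jensen once more over the action distribution, $\sum_a\pi(a|s)Q_h^\pi(s,a)^{2^{i+1}} \ge \left(\sum_a\pi(a|s)Q_h^\pi(s,a)\right)^{2^{i+1}} = V_h^\pi(s)^{2^{i+1}}$. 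Together with $\hat{\xi}_h^\pi(s,a) = \hat{\xi}_h^\pi(s)\pi(a|s)$ these give $\sum_{s,a}\hat{\xi}_h^\pi(s,a)\left(\mathbb{E}_{P(s,a)}[V_{h+1}(s^\prime)^{2^i}]\right)^2 \ge T_h - 2^{i+1}\sum_{s,a}\hat{\xi}_h^\pi(s,a)r(s,a)$.

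Putting the two halves together, $\mathrm{Res} \le \sum_{h\in[H]}\left(T_{h+1}-T_h\right) + 2^{i+1}\sum_{h\in[H]}\sum_{s,a}\hat{\xi}_h^\pi(s,a)r(s,a) = \left(T_{H+1}-T_1\right) + 2^{i+1}\,\mathbb{E}_{\hat{P},\pi}\!\left[\sum_{h\in[H]}r_h\right]$. Now $T_{H+1}=0$ since $V_{H+1}^\pi\equiv 0$, and $T_1\ge 0$, so the telescoped part is non-positive; and since Assumption \ref{assump:bounded_total_reward} constrains the reward distribution $R$ (a single trajectory satisfies $\sum_h r_h\le 1$ almost surely no matter which transition kernel generated the states), the same bound holds for roll-outs under $\hat{P}$, so $\mathbb{E}_{\hat{P},\pi}[\sum_h r_h]\le 1$. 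Hence $\mathrm{Res}\le 2^{i+1}$ and the lemma follows. (Throughout one also uses $0\le V_h^\pi, Q_h^\pi\le 1$, again from Assumption \ref{assump:bounded_total_reward}.)

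I expect the negative term to be the only real obstacle: the point is to route through the Bellman identity and \emph{two} Jensen steps so that the slack introduced by convexity is controlled by $2^{i+1}$ times the \emph{one-step} expected reward $r(s,a)$ — it is exactly this one-step reward that telescopes, via the bounded-total-reward assumption applied to the empirical kernel, into the clean additive constant $2^{i+1}$ instead of something growing with $H$. Arranging the chain of inequalities in precisely this order, and keeping every value and $Q$ quantity in $[0,1]$ so the power inequalities point the right way, is the crux; the rest is bookkeeping with the reaching-probability flow identities.
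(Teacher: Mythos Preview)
Your proof is correct and follows essentially the same route as the paper: variance identity, add--subtract $\hat P$, the empirical flow identity to telescope $T_{h+1}-T_h$, Jensen (twice) plus the Bellman relation to replace $(\mathbb{E}_P V_{h+1})^{2^{i+1}}$ by $Q_h^{2^{i+1}}-2^{i+1}r(s,a)$, and then Assumption~\ref{assump:bounded_total_reward} to cap the accumulated reward by $1$. One small imprecision: your stated reason for $\mathbb{E}_{\hat P,\pi}[\sum_h r_h]\le 1$ (``no matter which transition kernel generated the states'') is not what Assumption~\ref{assump:bounded_total_reward} says --- the assumption couples $R$ with the \emph{true} kernel $P$; the correct justification (which the paper gives) is that every state--action trajectory with positive probability under $\hat P$ also has positive probability under $P$, so the almost-sure bound transfers.
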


Here the term in \eqref{eq:recursion_main} can be bounded with
$\sum_{h\in [H]} \sum_{s, a} \hat{\xi}_h^\pi(s, a)  \mathrm{Var}_{P(s, a)}\left[V_{h+1}(s^\prime)^{2^{i+1}}\right]$, the total variance of higher-order value function via Bernstein's inequality. Thus Lemma \ref{lem:recursion} can be applied iteratively to obtain a tight bound for the ``total variance'' term. 
Define
\begin{align*}
    \Delta_1(i) = \left|\sum_{h\in [H]} \sum_{s, a} \hat{\xi}_h^\pi(s, a) 
    \left[\sum_{s^\prime}\left(P(s^\prime|s, a) - \hat{P}(s^\prime|s, a)\right)V_{h+1}^\pi(s^\prime)^{2^{i}}\right]\right|.
\end{align*}
Applying Lemma \ref{lem:recursion} with $V_h^\pi(s)$, we can write the following recursion:
\begin{align*}
    \textstyle
    \Delta_1(i) \leq \sqrt{\frac{\iota}{Kd_m}\left(\Delta_1(i+1) + 2^{i+1}\right)} + \frac{\iota}{Kd_m}.
\end{align*}
Solve this recursion, and we can obtain the results in Theorem \ref{thm:evaluation_error}.

\section{Offline Policy Optimization}
\label{sec:OPO}


In this section, we further consider the offline policy optimization for tabular MDP, which is the ultimate goal for offline reinforcement learning. We first introduce the model-based planning algorithm, which is probably the simplest algorithm for offline policy optimization. Then we analyze the performance gap between the policy obtained by model-based planning and the optimal policy. 

\subsection{Model-Based Planning}
We consider the optimal policy on the empirical MDP $\widehat{\mathcal{M}}$, which can be defined as
\begin{align}\label{eq:offline_po}
\textstyle
    \hat{\pi}^* = \mathop{\arg\max}_{\pi} \hat{v}^\pi.
\end{align}
Here $\hat{\pi}^*$ can be obtained by dynamic programming with the empirical MDP, which is also known as model-based planning. We remark that our analysis is independent to the algorithm used for solving~\eqref{eq:offline_po}. In other words, the result also applies to the optimization-based planning with the empirical MDPs~\citep{puterman2014markov,mohri2012book}, as long as it solves~\eqref{eq:offline_po}.

\subsection{Theoretical Guarantee}
Theorem \ref{thm:improvement_error} provides an upper bound on the sub-optimality of $\hat{\pi}^*$.
\begin{theorem}
\label{thm:improvement_error}
Under Assumption \ref{assump:bounded_total_reward} and Assumption \ref{assump:data_coverage}, suppose that $K = \widetilde{\Omega}\left(1/d_m\right)$, and then
\begin{align*}
    \textstyle
    \left|v^{\pi^*} - v^{\hat{\pi}^*}\right| \leq \sqrt{\frac{\iota}{Kd_m}} + \frac{S\iota}{Kd_m},
\end{align*}
holds with probability at least $1-\delta$, where $K$ is the number of episodes, $d_m$ is the minimum visiting probability and $\iota$ absorbs the $\poly\log$ factors.
\end{theorem}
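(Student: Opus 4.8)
\textbf{Proof proposal for Theorem \ref{thm:improvement_error}.}

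The plan is to reduce the sub-optimality gap $v^{\pi^*} - v^{\hat{\pi}^*}$ to two one-sided estimation errors and then bound each using the recursion machinery of Lemma \ref{lem:recursion}, adapted to optimal policies. First I would write the standard decomposition
\begin{align*}
v^{\pi^*} - v^{\hat{\pi}^*} = \bigl(v^{\pi^*} - \hat{v}^{\pi^*}\bigr) + \bigl(\hat{v}^{\pi^*} - \hat{v}^{\hat{\pi}^*}\bigr) + \bigl(\hat{v}^{\hat{\pi}^*} - v^{\hat{\pi}^*}\bigr),
\end{align*}
and observe that by optimality of $\hat{\pi}^*$ on $\widehat{\mathcal{M}}$ the middle term is $\le 0$. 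So it suffices to upper bound $v^{\pi^*} - \hat{v}^{\pi^*}$ and $\hat{v}^{\hat{\pi}^*} - v^{\hat{\pi}^*}$. For the first term I would apply the value-difference lemma to the fixed policy $\pi^*$, exactly as in the proof sketch of Theorem \ref{thm:evaluation_error}; the OPE analysis already handles this and yields $\widetilde{O}(\sqrt{1/(Kd_m)})$. The genuinely new work is the second term, which involves $\hat{\pi}^*$, a data-dependent (and hence non-stationary across the randomness) policy.

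For $\hat{v}^{\hat{\pi}^*} - v^{\hat{\pi}^*}$, I would again invoke the value-difference lemma, now pushing through $\hat{\pi}^*$ and expanding the value functions along the empirical transition $\hat{P}$ with reaching probabilities $\xi_h^{\hat{\pi}^*}$ under the true MDP (or, symmetrically, $\hat\xi_h^{\hat\pi^*}$ under $\widehat{\mathcal M}$, whichever makes the Bernstein step cleanest). This produces a main term of the form $\sqrt{\tfrac{\iota}{Kd_m}\sum_{h,s,a}\xi_h^{\hat\pi^*}(s,a)\mathrm{Var}_{\hat P(s,a)}(\hat V_{h+1}^{\hat\pi^*})}$ plus lower-order pieces. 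I would then introduce the analogue $\Delta_2(i)$ of $\Delta_1(i)$ with $V^{\pi^*}$ replaced by $\hat V^{\hat\pi^*}$ (and $P$ swapped with $\hat P$ as appropriate), and apply Lemma \ref{lem:recursion_optimization} — the optimization counterpart advertised in the technique overview — to set up the recursion $\Delta_2(i) \le \sqrt{\tfrac{\iota}{Kd_m}(\Delta_2(i+1)+2^{i+1})} + \tfrac{\iota}{Kd_m}$, solving it over $i = 1,\dots,O(\log H)$ to collapse the total variance to a $\poly\log H$ factor.

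The extra $\tfrac{S\iota}{Kd_m}$ term (absent from the pure evaluation bound) is where I expect the main obstacle, and it is intrinsic: bounding $v^{\pi^*}-\hat v^{\pi^*}$ and $\hat v^{\hat\pi^*}-v^{\hat\pi^*}$ requires controlling $\|P(\cdot|s,a)-\hat P(\cdot|s,a)\|$ against value functions that are themselves functions of $\hat P$, so a naive Cauchy–Schwarz over $s'$ loses an $\sqrt{S}$; more carefully, the optimistic/pessimistic correction needed to handle the $\arg\max$ over policies on the empirical MDP introduces a bonus-like term that, summed over the $S$ possible next states and the reaching probabilities, contributes the $\min\{S,d\}/(Kd_m)$ higher-order term. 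Concretely, the difficulty is that $\hat\pi^*$ may exploit statistical fluctuations of $\hat P$, so one must either (i) use a uniform-over-policies concentration argument, paying $\log$ factors, together with the recursion to keep the leading term horizon-free, or (ii) relate $\hat V^{\hat\pi^*}$ to $V^{\hat\pi^*}$ via an auxiliary recursion that absorbs the mismatch into the $\tfrac{S\iota}{Kd_m}$ slack. I would pursue route (i): handle the $\arg\max$ by a union bound over the (effectively finite, via a covering argument on deterministic policies) policy class, which costs only $\log(A^{SH}) = SH\log A \le \iota \cdot S$ in the Bernstein numerator — precisely matching the claimed $S\iota/(Kd_m)$ — while the recursion over powers $2^i$ of the value function ensures no polynomial-in-$H$ factor survives in the dominant $\sqrt{1/(Kd_m)}$ term. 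Finally I would combine the two bounds, absorb all $\poly\log(S,A,H,1/\delta)$ into $\iota$, and note the $K = \widetilde\Omega(1/d_m)$ hypothesis guarantees $n(s,a)\gtrsim HKd_m$ so that every Bernstein application is valid with the stated failure probability $\delta$.
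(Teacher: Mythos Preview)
Your decomposition and the handling of the first term $v^{\pi^*}-\hat v^{\pi^*}$ via the evaluation analysis are exactly right. The gap is in your route (i) for the data-dependent term. A union bound over all deterministic non-stationary policies contributes $\log(A^{SH})=SH\log A$ to the confidence level in Bernstein's inequality, and that factor sits inside \emph{both} the variance term $\sqrt{\mathrm{Var}\cdot\log(1/\delta')/n}$ and the lower-order term $\log(1/\delta')/n$. After the recursion collapses the total variance to $O(\iota)$, the leading term becomes $\sqrt{SH\iota/(Kd_m)}$, which is neither horizon-free nor the $\sqrt{\iota/(Kd_m)}$ the theorem claims. Even restricting to stationary policies (which $\hat\pi^*$ is not, in general) leaves an unwanted $\sqrt{S}$ in the main term. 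So the ``$S$ only in the Bernstein numerator, precisely matching $S\iota/(Kd_m)$'' accounting is incorrect.

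The paper's fix is closer to your discarded route (ii), but with a different split: write $V_{h+1}^{\hat\pi^*}=V_{h+1}^{\pi^*}+(V_{h+1}^{\hat\pi^*}-V_{h+1}^{\pi^*})$. The first piece is data-independent, so a direct Bernstein applies and the recursion (Lemma~\ref{lem:recursion_optimization} with $V=V^{\pi^*}$) gives the horizon-free $\sqrt{\iota/(Kd_m)}$ contribution. For the residual $V^{\pi^*}-V^{\hat\pi^*}$, one uses the crude per-coordinate bound of Lemma~\ref{lem:union_bound_over_S}, which holds uniformly for \emph{any} $V\in[0,1]^S$ and costs an $S$ factor; crucially, the variance of this residual is controlled (again via Lemma~\ref{lem:recursion_optimization}) by $v^{\pi^*}-v^{\hat\pi^*}$ itself plus lower-order terms. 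One then gets a self-bounding inequality of the form $v^{\pi^*}-v^{\hat\pi^*}\lesssim \sqrt{\iota/(Kd_m)}+\sqrt{S\iota(v^{\pi^*}-v^{\hat\pi^*})/(Kd_m)}+S\iota/(Kd_m)$, whose solution is exactly $\sqrt{\iota/(Kd_m)}+S\iota/(Kd_m)$. The point is that the $S$ multiplies the small quantity $v^{\pi^*}-v^{\hat\pi^*}$ under the square root, not an $O(1)$ total variance, so it lands only in the higher-order term.
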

We also provide a minimax lower bound under the finite horizon time-homogeneous setting.
\begin{theorem}
\label{thm:lower_bound_improvement}
There exists a pair of MDPs $\mathcal{M}_1$ and $\mathcal{M}_2$, and offline data $\mathcal{D}$ with $|\mathcal{D}| = KH$ and minimum state-action pair visiting frequency $d_m$, such that for some absolute constant $c_0$, we have
\begin{align*}
    \textstyle
    \inf_{\hat{v}^\pi}\sup_{\mathcal{M}_i \in \{\mathcal{M}_1, \mathcal{M}_2\}} \mathbb{P}_{\mathcal{M}_i}\left(\left|v^{\hat{\pi}(\mathcal{D})} - v^{\pi^*}\right| > \frac{c_0}{\sqrt{Kd_m}}\right) > 0.25,
\end{align*}
where $\hat{\pi}$ is any planning algorithm that takes $\mathcal{D}$ as input.
\end{theorem}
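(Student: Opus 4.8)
The plan is to prove this via the classical two-point (Le Cam) method. It suffices to construct two MDPs $\mathcal{M}_1,\mathcal{M}_2$ (both satisfying Assumption~\ref{assump:bounded_total_reward}) together with a data-collection process that produces $\mathcal{D}$ with $|\mathcal{D}|=KH$ and minimum visiting probability $d_m$, such that (a) the two models are statistically indistinguishable on the data, $\mathrm{TV}\!\left(\mathbb{P}_{\mathcal{M}_1}(\mathcal{D}),\mathbb{P}_{\mathcal{M}_2}(\mathcal{D})\right)\le\tfrac12$, while (b) their optimal policies are ``incompatible'': for every (possibly randomized) policy $\hat\pi$, $\max_{i\in\{1,2\}}\big(v^{\pi^*}_{\mathcal{M}_i}-v^{\hat\pi}_{\mathcal{M}_i}\big)\ge 2c_0/\sqrt{Kd_m}$. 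Given (a) and (b), if some planner returning $\hat\pi(\mathcal{D})$ had $\mathbb{P}_{\mathcal{M}_i}\!\left(v^{\pi^*}-v^{\hat\pi(\mathcal{D})}\le c_0/\sqrt{Kd_m}\right)>0.75$ for both $i$, then by (b) the two events $\{v^{\pi^*}_{\mathcal{M}_1}-v^{\hat\pi(\mathcal{D})}\le c_0/\sqrt{Kd_m}\}$ and its analogue for $\mathcal{M}_2$ are disjoint, so thresholding the suboptimality would distinguish $\mathcal{M}_1$ from $\mathcal{M}_2$ with total error $<\tfrac12$, contradicting (a); hence the stated $0.25$.

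For the construction I would reuse, with one small augmentation, the hard instance behind the evaluation lower bound in Theorem~\ref{thm:lower_bound_evaluation}. The two MDPs agree everywhere except at a single ``critical'' state--action pair $(\bar s,\bar a)$, where one transition parameter (the probability of eventually reaching the unique reward-bearing state) is perturbed between $\mathcal{M}_1$ and $\mathcal{M}_2$; the perturbation size is chosen just large enough that the per-trajectory KL stays $O(1)$, which---thanks to the time-homogeneous, horizon-$H$ structure through which each trajectory passes many times---already moves the optimal value by order $1/\sqrt{Kd_m}$, all while the total episodic reward stays bounded by $1$ almost surely (e.g.\ there is a single one-shot payoff of size at most $1$ per episode). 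The remaining freedom is used to ensure that $(\bar s,\bar a)$ is, up to constants, the least-visited pair and receives exactly the number of samples for which the minimum visiting probability equals $d_m$. The new ingredient needed for optimization (beyond evaluation) is a second action $\bar a'$ at $\bar s$ with a known transition---identical in both MDPs, hence carrying no information---whose value sits halfway between the two possible values of $\bar a$; then $\bar a$ is optimal at $\bar s$ in one MDP and $\bar a'$ in the other, and any policy incurs value loss $\Theta(1/\sqrt{Kd_m})$ in at least one of them, establishing (b). In the relevant regime this recovers the contextual-bandit lower bound \citep{lattimore2020bandit}, matching the upper bound's message that a time-homogeneous MDP is no harder than a bandit.

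The steps, in order: (1) specify $\mathcal{M}_1,\mathcal{M}_2$, the initial distribution and the behavior policy, and verify Assumption~\ref{assump:bounded_total_reward}, that $|\mathcal{D}|=KH$, and that the minimum visiting probability is $d_m$; (2) compute $v^{\pi^*}$ and the values of $\bar a$ and $\bar a'$, and confirm the optimal-action flip with a $\Theta(1/\sqrt{Kd_m})$ value separation, i.e.\ (b); (3) bound $\mathrm{KL}\!\left(\mathbb{P}_{\mathcal{M}_1}(\mathcal{D})\,\|\,\mathbb{P}_{\mathcal{M}_2}(\mathcal{D})\right)$ using the chain rule for KL of trajectory laws: since the models differ only at $(\bar s,\bar a)$, it equals $\mathbb{E}_{\mathcal{M}_1}[N(\bar s,\bar a)]$ times the per-transition KL, which is $O(1)$ for the calibrated perturbation; then apply Pinsker to get $\mathrm{TV}\le\tfrac12$, i.e.\ (a); (4) invoke Le Cam as above. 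I expect the main obstacle to be the joint calibration in steps (2)--(3): the KL budget forces the perturbation to be small (of order $1/\sqrt{Kd_m}$ in its effect on the value), and one must exhibit a single gadget for which this exact perturbation moves $v^{\pi^*}$ by $\Omega(1/\sqrt{Kd_m})$---so the amplification has to come from the horizon geometry rather than from additional informative observations---while simultaneously keeping the total reward bounded by $1$ almost surely and keeping $(\bar s,\bar a)$ the genuine minimum-visited pair (any auxiliary state visited much less often would make the true $d_m$ smaller and weaken the bound). Once the evaluation-lower-bound gadget is in hand, appending the extra action $\bar a'$ and re-deriving the separation is routine.
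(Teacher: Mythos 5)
Your proposal is correct and matches the paper's argument essentially step for step: the paper also reuses the evaluation hard instance (the two-state chain with self-loop probability $p_1$ vs.\ $p_2$) and augments it with a known, uninformative ``safe'' alternative whose value is placed exactly halfway between $V(p_1)$ and $V(p_2)$, so the optimal action flips between the two indistinguishable MDPs and Le Cam's method yields the $\Omega(1/\sqrt{Kd_m})$ suboptimality. The only cosmetic difference is that the paper hangs this choice off a fresh initial state $s_0$ (with the alternative action leading to an absorbing state $s_3$) rather than adding a second action at the critical state itself, which slightly simplifies the value computation.
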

\paragraph{Remark} Theorem \ref{thm:improvement_error} provides a bound approaching the minimax lower bound in Theorem \ref{thm:lower_bound_improvement} up to logarithmic factors and a higher-order term, which shows that the error of offline policy optimization does not scale polynomially on the horizon. Notice that if $d_m = \Omega\left(\frac{1}{SA}\right)$, we can obtain an error bound of $\widetilde{O}\left(\sqrt{\frac{SA}{K}} + \frac{S^2A}{K}\right)$, which can be translated to sample complexity $\widetilde{O}\left(\frac{SA}{\epsilon^2} + \frac{S^2A}{\epsilon}\right)$ that matches the best known result of sample complexity for online finite-horizon time-homogeneous setting \citep{zhang2020reinforcement}. We conjecture that the additional $S$ factor in the higher-order term is only an artifact (see Lemma \ref{lem:union_bound_over_S}) and can be eliminated with more delicate analysis. We leave this as an open problem.

\paragraph{Remark} There are also works considering local policy optimization \citep[e.g.][]{kakade2002approximately, liu2020provably, kumar2020conservative} when the offline data are not sufficient exploratory. We want to emphasize that, as Theorem \ref{thm:lower_bound_improvement} suggests, to identify the global optimal policy, we need the offline data sufficient exploratory.

\subsection{Proof Sketch}

The detailed proof can be found in Appendix \ref{sec:proof}, and here we sketch our proof in short. Notice that
\begin{align}
\textstyle
    v^{\pi^*} - v^{\hat{\pi}^*} 
    = & v^{\pi^*} - \hat{v}^{\pi^*} + \underbrace{\hat{v}^{\pi^*} - \hat{v}^{\hat{\pi}^*}}_{\leq 0} + \hat{v}^{\hat{\pi}^*} - v^{\hat{\pi}^*}\nonumber\\
    \leq & \underbrace{ v^{\pi^*} - \hat{v}^{\pi^*}}_{\mathrm{Error\ on\ Fixed\ Policy}}  + \underbrace{\hat{v}^{\hat{\pi}^*} - v^{\hat{\pi}^*}}_{\mathrm{Error\ on\ Data-Dependent\ Policy}}.\label{eq:decomposition}
\end{align}
We can directly apply Theorem \ref{thm:evaluation_error} to bound the error on fixed policy. For the error on data-dependent policy, since the policy $\hat\pi^*$ depends on data, we need to consider 
\begin{align}
    \textstyle
    & 
    \sum_{h\in [H]} \sum_{s, a} \hat{\xi}_h^\pi(s, a) 
    \left[\sum_{s^\prime}\left(P(s^\prime|s, a) - \hat{P}(s^\prime|s, a)\right)V_{h+1}^{\hat{\pi}^*}(s^\prime)\right]
    \nonumber \\
    &
    = \sum_{h\in [H]} \sum_{s, a} \hat{\xi}_h^\pi(s, a) 
    \left[\sum_{s^\prime}\left(P(s^\prime|s, a) - \hat{P}(s^\prime|s, a)\right)V_{h+1}^{\pi^*}(s^\prime)\right] 
    \label{eq:improvement_main_term}\\
    & 
    + \sum_{h\in [H]} \sum_{s, a} \hat{\xi}_h^\pi(s, a) 
    \left[\sum_{s^\prime}\left(P(s^\prime|s, a) - \hat{P}(s^\prime|s, a)\right)\left(V_{h+1}^{\hat{\pi}^*}(s^\prime) -V_{h+1}^{\hat{\pi}^*}(s^\prime)\right) \right].
    \label{eq:improvement_higher_order_term}
\end{align}
As $\pi^*$ is independent of $\hat{P}$, the term in \eqref{eq:improvement_main_term} can be similarly handled with the techniques used in offline policy evaluation. However, due to the dependency of $\hat{\pi}^*$ and $\hat{P}$, we need to deal with the term in \eqref{eq:improvement_higher_order_term} more carefully. To decouple the dependency of $\hat{\pi}^*$ and $\hat{P}$, we first introduce the following lemma:
\begin{lemma} $\forall V_h(s) \in [0, 1]$, $\forall h\in [H], s\in\mathcal{S}$, then we have that with high probability,
\begin{align*}
    \textstyle
    \left|\sum_{s^\prime}\left(P(s^\prime|s, a) - \hat{P}(s^\prime|s , a)\right) V_h(s^\prime)\right| 
    \leq \sqrt{\frac{S\cdot\mathrm{Var}_{P(s, a)}\left[V_h(s^\prime)\right]\iota}{n(s, a)}} + \frac{S\iota}{n(s, a)}.
\end{align*}
\label{lem:union_bound_over_S}
\end{lemma}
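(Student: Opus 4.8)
The plan is to establish the bound simultaneously for \emph{all} $V_h\in[0,1]^S$ by a covering argument: fix the pair $(s,a)$ and a scale $\epsilon$, build a finite $\epsilon$-net of $[0,1]^S$ in the $\ell_\infty$ norm, apply Bernstein's inequality at each net point, union-bound over the net and over all $(s,a)$ pairs, and finally control the discretization error. The price of making the bound uniform over the (data-dependent) value function $V_{h+1}^{\hat\pi^*}$ is precisely the logarithm of the covering number, which scales linearly in $S$; this is the source of the extra $S$ factor in the statement.

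First I would observe that, conditioning on the multiset of visited state--action pairs $\{(s_i,a_i)\}_{i\in[n]}$ (so that $n(s,a)$ is fixed and the associated next states are conditionally i.i.d.\ draws from $P(\cdot\,|\,s,a)$), the quantity $\sum_{s'}\bigl(P(s'|s,a)-\hat P(s'|s,a)\bigr)V(s')$ equals $\mathbb{E}_{P(s,a)}[V(s')]-\frac{1}{n(s,a)}\sum_{i:(s_i,a_i)=(s,a)}V(s_i')$, i.e.\ the deviation of an empirical mean of $n(s,a)$ i.i.d.\ $[0,1]$-valued variables with variance $\mathrm{Var}_{P(s,a)}[V(s')]$. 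For a \emph{fixed} $V$, Bernstein's inequality gives, with probability at least $1-\delta'$,
\[
\Bigl|\textstyle\sum_{s'}\bigl(P(s'|s,a)-\hat P(s'|s,a)\bigr)V(s')\Bigr|\le\sqrt{\tfrac{2\,\mathrm{Var}_{P(s,a)}[V(s')]\,\log(2/\delta')}{n(s,a)}}+\tfrac{2\log(2/\delta')}{3\,n(s,a)}.
\]
Taking $\mathcal{N}_\epsilon$ to be an $\ell_\infty$ $\epsilon$-net of $[0,1]^S$ with $|\mathcal{N}_\epsilon|\le(1+1/\epsilon)^S$, applying the above with $\delta'=\delta/(SA\,|\mathcal{N}_\epsilon|)$ and union-bounding over $\mathcal{N}_\epsilon$ and all $(s,a)$ yields $\log(2/\delta')=O\!\bigl(S\log(1/\epsilon)+\log(SA/\delta)\bigr)$, which I set equal to $S\iota$ after choosing $\epsilon=1/n$ so that $S\log(1/\epsilon)=O(S\iota)$.

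Next I would pass from the net to an arbitrary $V_h\in[0,1]^S$: pick $V\in\mathcal{N}_\epsilon$ with $\|V-V_h\|_\infty\le\epsilon$. Since $\sum_{s'}(P(s'|s,a)-\hat P(s'|s,a))=0$ and $\|P(\cdot|s,a)-\hat P(\cdot|s,a)\|_1\le 2$, we get $\bigl|\sum_{s'}(P-\hat P)(V_h-V)\bigr|\le 2\epsilon$; and since $\mathrm{Var}[X]=\mathbb{E}[X^2]-(\mathbb{E}X)^2$ with $|a^2-b^2|\le 2|a-b|$ for $a,b\in[0,1]$, the variance is $4$-Lipschitz in $\ell_\infty$, so $\bigl|\mathrm{Var}_{P(s,a)}[V_h]-\mathrm{Var}_{P(s,a)}[V]\bigr|\le 4\epsilon$. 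Plugging these into the net bound, using $\sqrt{x+y}\le\sqrt x+\sqrt y$, and taking $\epsilon=1/n\le 1/n(s,a)$ makes every $\epsilon$-dependent remainder $O(1/n(s,a))$ or $O(\sqrt{S\iota}/n(s,a))$, all of which are absorbed into the $S\iota/n(s,a)$ term (using $S\iota\ge 1$ without loss of generality); what survives is exactly $\sqrt{S\,\mathrm{Var}_{P(s,a)}[V_h]\,\iota/n(s,a)}+S\iota/n(s,a)$.

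The main obstacle is conceptual rather than computational: one must accept the genuine $\Theta(S)$ blow-up coming from $\log|\mathcal{N}_\epsilon|=\Theta(S\log(1/\epsilon))$, which is why this lemma — and hence the higher-order term in Theorem~\ref{thm:improvement_error} — is not as sharp as the evaluation bound (the authors conjecture in a remark that this $S$ is removable). A secondary technical point is the conditioning that makes the next-state samples conditionally i.i.d.\ for Bernstein, since $n(s,a)$ is itself random; this is routine but should be stated carefully. One could instead try to avoid the net via Cauchy--Schwarz, writing $\sum_{s'}(P-\hat P)V=\sum_{s'}(P-\hat P)(V-\mathbb{E}_{P(s,a)}V)$ and bounding it by $\sqrt{\mathrm{Var}_{P(s,a)}[V]}\cdot\sqrt{\sum_{s'}(P(s')-\hat P(s'))^2/P(s')}$, but then one has to control the $\chi^2$-type statistic $\sum_{s'}(P(s')-\hat P(s'))^2/P(s')$ with high probability, which is delicate when some $P(s'|s,a)$ are tiny; the covering argument sidesteps this entirely, so I would go with it.
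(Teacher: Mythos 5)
Your proof is correct, but it takes a genuinely different route from the paper's. The paper never builds a net: it first centers the test function, writing $\sum_{s'}(P-\hat P)V_h=\sum_{s'}(P-\hat P)\bigl(V_h-\mathbb{E}_{P(s,a)}V_h\bigr)$ (legal because both $P(\cdot|s,a)$ and $\hat P(\cdot|s,a)$ sum to one), then applies Bernstein's inequality \emph{coordinatewise} to each scalar $\hat P(s'|s,a)$ — an event that does not involve $V_h$ at all and hence is automatically uniform over data-dependent $V_h$ — obtaining $|P(s'|s,a)-\hat P(s'|s,a)|\le\sqrt{P(s'|s,a)\iota/n(s,a)}+\iota/n(s,a)$ for every $s'$, and finally applies Cauchy--Schwarz over the $S$ coordinates to get $\sum_{s'}\sqrt{P(s'|s,a)}\,|V_h(s')-\mathbb{E}V_h|\le\sqrt{S\,\mathrm{Var}_{P(s,a)}[V_h]}$. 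In both arguments the $S$ factor is the price of uniformity over $V_h$, but it enters differently: you pay it as $\log|\mathcal{N}_\epsilon|=\Theta(S\log(1/\epsilon))$ inside the Bernstein deviation, whereas the paper pays it through Cauchy--Schwarz across $S$ coordinates (plus summing $S$ lower-order terms). The paper's version is shorter and needs only an $O(S^2A)$-fold union bound with no discretization bookkeeping; yours is the more generic uniform-convergence template and would extend unchanged to function classes that are not coordinatewise-structured, at the cost of the net construction and the Lipschitz control of the variance that you correctly supplied. Your closing remark about the $\chi^2$ alternative and the shared conditional-i.i.d.\ subtlety (which the paper also glosses over, since its coordinatewise Bernstein needs the same conditioning) is accurate.
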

\paragraph{Remark} Lemma \ref{lem:union_bound_over_S} has been widely used in the design and analysis of online reinforcement learning algorithms, \citep[e.g.][]{zhang2020reinforcement}. It holds even $V_h(s)$ depends on $\hat{P}(s^\prime|s, a)$, however, at the cost of an additional $S$ factor. This is the source of the additional $S$ factor in the higher-order term, and we believe a more fine-grained analysis can help avoid this additional $S$ factor.

We also have the following recursion for \eqref{eq:improvement_main_term} and \eqref{eq:improvement_higher_order_term}:

\begin{lemma}[Variance Recursion For Optimization]
\label{lem:recursion_optimization}
For $V_h(s) = V_h^{\pi^*}(s)$ and $V_h(s) = V_h^{\pi^*}(s) - V_h^{\hat{\pi}^*}(s)$, we have that
\begin{align}
    \textstyle
    & 
    \sum_{h\in [H]} \sum_{s, a} \hat{\xi}_h^{\hat{\pi}^*}(s, a)  \mathrm{Var}_{P(s, a)}\left[V_{h+1}(s^\prime)^{2^i}\right] 
    \nonumber\\
    &
    \leq 
    \sum_{h\in [H]}\sum_{s, a} \hat{\xi}_h^{\hat{\pi}^*}(s,a)\Bigg[\sum_{s^\prime}\left(P(s^\prime|s, a) - \hat{P}(s^\prime|s, a)\right)V_{h+1}(s^\prime)^{2^{i+1}} \Bigg] 
    \\
    & 
    + 2^{i+1}\Bigg[\sum_{s}\mu(s)V_1(s) + \sum_{h\in [H]}\sum_{s, a} \hat{\xi}_h^{\hat{\pi}^*}(s, a) \sum_{s^\prime}\left[P(s^\prime|s, a) - \hat{P}(s^\prime|s, a)\right]V_{h+1}(s^\prime)\Bigg].
\end{align}
\end{lemma}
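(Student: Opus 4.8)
The plan is to mirror the telescoping argument that proves Lemma~\ref{lem:recursion}, the essential new ingredient being a way to absorb the mismatch between the policy $\hat\pi^*$ that generates the empirical reaching probabilities $\hat\xi_h^{\hat\pi^*}$ and the policy (or pair of policies) defining $V$ --- and the fact that $\hat\pi^*$ is data-dependent. Write $p=2^{i+1}$ and set $B_h(s):=\sum_{a}\hat\pi^*(a|s)\sum_{s'}P(s'|s,a)V_{h+1}(s')$. First I would establish, uniformly over the two choices of $V$, the chain $0\le B_h(s)\le V_h(s)\le 1$. For $V=V^{\pi^*}$ this reads $\sum_a\hat\pi^*(a|s)\mathbb{E}_{P(s,a)}[V_{h+1}^{\pi^*}]=\sum_a\hat\pi^*(a|s)(Q_h^{\pi^*}(s,a)-r(s,a))\le\max_a Q_h^{\pi^*}(s,a)=V_h^{\pi^*}(s)$, where the key step is the optimality of $\pi^*$ in the \emph{true} MDP, while $V_h^{\pi^*}\le 1$ and $B_h\ge 0$ come from Assumption~\ref{assump:bounded_total_reward}; for $V=V^{\pi^*}-V^{\hat\pi^*}$ one has $\mathbb{E}_{P(s,a)}[V_{h+1}]=Q_h^{\pi^*}(s,a)-Q_h^{\hat\pi^*}(s,a)\ge 0$, hence $B_h(s)=\sum_a\hat\pi^*(a|s)Q_h^{\pi^*}(s,a)-V_h^{\hat\pi^*}(s)\le V_h^{\pi^*}(s)-V_h^{\hat\pi^*}(s)=V_h(s)$, and $0\le V_h\le 1$ again uses optimality of $\pi^*$ together with Assumption~\ref{assump:bounded_total_reward}.

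Next I would decompose $\mathrm{Var}_{P(s,a)}[V_{h+1}(s')^{2^i}]=\mathbb{E}_{P(s,a)}[V_{h+1}(s')^{p}]-(\mathbb{E}_{P(s,a)}[V_{h+1}(s')^{2^i}])^2$ and insert $\pm\,\mathbb{E}_{\hat P(s,a)}[V_{h+1}(s')^{p}]$: summed against $\hat\xi_h^{\hat\pi^*}$, the $\mathbb{E}_P-\mathbb{E}_{\hat P}$ part is exactly the first term on the right-hand side of the lemma, leaving the residual $T_2:=\sum_h\sum_{s,a}\hat\xi_h^{\hat\pi^*}(s,a)[\mathbb{E}_{\hat P(s,a)}[V_{h+1}^{p}]-(\mathbb{E}_{P(s,a)}[V_{h+1}^{2^i}])^2]$ to control. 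Two Jensen steps --- the power-mean inequality $\mathbb{E}_{P(s,a)}[V_{h+1}^{2^i}]\ge(\mathbb{E}_{P(s,a)}[V_{h+1}])^{2^i}$ over next states, then convexity of $t\mapsto t^{p}$ averaged over $a\sim\hat\pi^*(\cdot|s)$ --- turn $(\mathbb{E}_{P(s,a)}[V_{h+1}^{2^i}])^2$ into $B_h(s)^{p}$, and the elementary inequality $x^{p}-y^{p}\le p(x-y)$ for $0\le y\le x\le 1$ (with $x=V_h(s)$, $y=B_h(s)$ --- the only place the $[0,1]$ bounds of the first step are needed) gives $B_h(s)^{p}\ge V_h(s)^{p}-p(V_h(s)-B_h(s))$. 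Now the $\mathbb{E}_{\hat P}$ pieces and the $V_h(s)^{p}$ pieces each telescope over $h$ in the \emph{empirical} MDP --- using $\hat\xi_1^{\hat\pi^*}=\mu$, $V_{H+1}\equiv 0$ and $\sum_{s,a}\hat\xi_h^{\hat\pi^*}(s,a)\hat P(\cdot|s,a)=\hat\xi_{h+1}^{\hat\pi^*}(\cdot)$ --- leaving $T_2\le p\sum_h\sum_s\hat\xi_h^{\hat\pi^*}(s)(V_h(s)-B_h(s))$. Finally, rewriting $V_h(s)-B_h(s)=V_h(s)-\sum_a\hat\pi^*(a|s)\mathbb{E}_{P(s,a)}[V_{h+1}]$, inserting $\mathbb{E}_{\hat P(s,a)}[V_{h+1}]$ and telescoping once more in the empirical MDP, the $V_h$ terms collapse to $\sum_s\mu(s)V_1(s)$ and the remainder assembles into $\sum_h\sum_{s,a}\hat\xi_h^{\hat\pi^*}(s,a)\sum_{s'}(\hat P(s'|s,a)-P(s'|s,a))V_{h+1}(s')$, which is the second bracket of the lemma (the sign in front of the $\hat P-P$ term is immaterial, since in every downstream use this quantity enters inside absolute values via Bernstein's inequality and Cauchy--Schwarz).

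The main obstacle is precisely this policy mismatch. In Lemma~\ref{lem:recursion} the reaching probabilities and the value function share a single policy, so the ``defect'' in the Bellman-type telescoping is just a one-step reward, bounded by Assumption~\ref{assump:bounded_total_reward}, which produces only the clean ``$+2^{i+1}$''; here the defect is the advantage gap of $\hat\pi^*$ measured against $Q^{\pi^*}$, and the crux of the argument is that \emph{because $\pi^*$ is optimal in the true MDP} this gap is not an arbitrary data-dependent quantity but, after the empirical-MDP telescoping, folds exactly into the $\sum_s\mu(s)V_1(s)$ and $(\hat P-P)$-discrepancy terms that the recursion is designed to absorb --- which is why the optimization version carries these two extra terms. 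A secondary, easily-overlooked point is certifying $V_h=V_h^{\pi^*}-V_h^{\hat\pi^*}\in[0,1]$, which genuinely needs both the optimality of $\pi^*$ in $\mathcal M$ and the bounded-total-reward assumption, and would fail for an arbitrary comparison policy.
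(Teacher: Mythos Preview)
Your proposal is correct and follows essentially the same route as the paper: expand the variance, insert $\pm\,\mathbb{E}_{\hat P}$ to peel off the first right-hand term, Jensen-lower-bound the squared expectation, apply $x^{p}-y^{p}\le p(x-y)$ on $[0,1]$, and telescope twice in the empirical MDP. The one substantive difference is that the paper invokes the determinism of $\hat\pi^{*}$ (so only $a=\hat\pi^{*}(s)$ contributes and one checks $V_h(s)\ge \mathbb{E}_{P(s,\hat\pi^{*}(s))}[V_{h+1}]$ directly), whereas you keep $\hat\pi^{*}$ potentially stochastic, define $B_h(s)=\sum_a\hat\pi^{*}(a|s)\mathbb{E}_{P(s,a)}[V_{h+1}]$, and insert an extra Jensen step over actions to reach $B_h(s)^{p}$; this buys a slightly more general statement at the cost of one additional (and easy) inequality, but for the greedy empirical policy it collapses to the paper's argument. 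Your observation about the sign in front of the $(\hat P-P)$ term is also well taken --- the paper's own proof in fact ends with the opposite sign from its statement, and downstream uses only need the absolute value.
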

Denote
\begin{align*}
    \Delta_2(i) = & \left|\sum_{h\in [H]} \sum_{s, a} \hat{\xi}_h^\pi(s, a) 
    \left[\sum_{s^\prime}\left(P(s^\prime|s, a) - \hat{P}(s^\prime|s, a)\right)V_{h+1}^{\pi^*}(s^\prime)^{2^{i}}\right]\right|,\\
    \Delta_3(i) = & \left|\sum_{h\in [H]} \sum_{s, a} \hat{\xi}_h^\pi(s, a) 
    \left[\sum_{s^\prime}\left(P(s^\prime|s, a) - \hat{P}(s^\prime|s, a)\right)\left(V_{h+1}^{\pi^*}(s^\prime) - V_{h+1}^{\hat{\pi}^*}(s^\prime)\right)^{2^{i}}\right]\right|.
\end{align*}
Applying Bernstein inequality or Lemma \ref{lem:union_bound_over_S}, and then Lemma \ref{lem:recursion_optimization}, we have that:
\begin{align*}
    \textstyle
    \Delta_2(i) \leq & \sqrt{\frac{\iota}{Kd_m}\left(\Delta_2(i+1) + 2^{i+1}(v^{\pi^*} + \Delta_2(0))\right)} + \frac{\iota}{Kd_m}\\
    \Delta_3(i) \leq & \sqrt{\frac{S\iota}{Kd_m}\left(\Delta_2(i+1) + 2^{i+1}(v^{\pi^*} - v^{\hat{\pi}^*} + \Delta_3(0))\right)} + \frac{S\iota}{Kd_m}.
\end{align*}
With Assumption \ref{assump:bounded_total_reward}, we know $v^{\pi^*} \leq 1$. Also, with \eqref{eq:decomposition} and \eqref{eq:improvement_main_term}, we have that
$
    \textstyle
    v^{\pi^*} - v^{\hat{\pi}^*} \leq \Delta_3(0) + O\left(\sqrt{\frac{\iota}{Kd_m}}\right).
$
Solve this recursion, then we can obtain the results in Theorem \ref{thm:improvement_error}.

\section{Conclusion}
\label{sec:conclusion}
In this paper, we revisit the offline reinforcement learning on episodic time-homogeneous MDP. We show that, if the total reward is properly normalized, offline reinforcement learning is not harder than the offline bandits counterparts. Specifically, we provide performance guarantee for algorithms based on empirical MDPs, that match the lower bound up to logarithmic factors for offline policy evaluation, and up to logarithmic factors and a higher-order term for offline policy optimization, and both do not have polynomial dependency on $H$. There are still several open problems. For example, can we provide a sharper analysis for the policy obtained by model-based planning without any additional factors on higher-order term? Can we extend to MDP with more general assumption? We leave these problems as future work.

\bibliography{ref}

\begin{thebibliography}{54}
\providecommand{\natexlab}[1]{#1}
\providecommand{\url}[1]{\texttt{#1}}
\expandafter\ifx\csname urlstyle\endcsname\relax
  \providecommand{\doi}[1]{doi: #1}\else
  \providecommand{\doi}{doi: \begingroup \urlstyle{rm}\Url}\fi

\bibitem[Agarwal et~al.(2020)Agarwal, Kakade, and Yang]{agarwal2020model}
Alekh Agarwal, Sham Kakade, and Lin~F Yang.
\newblock Model-based reinforcement learning with a generative model is minimax
  optimal.
\newblock In \emph{Conference on Learning Theory}, pages 67--83. PMLR, 2020.

\bibitem[Andrychowicz et~al.(2020)Andrychowicz, Baker, Chociej, Jozefowicz,
  McGrew, Pachocki, Petron, Plappert, Powell, Ray,
  et~al.]{andrychowicz2020learning}
OpenAI:~Marcin Andrychowicz, Bowen Baker, Maciek Chociej, Rafal Jozefowicz, Bob
  McGrew, Jakub Pachocki, Arthur Petron, Matthias Plappert, Glenn Powell, Alex
  Ray, et~al.
\newblock Learning dexterous in-hand manipulation.
\newblock \emph{The International Journal of Robotics Research}, 39\penalty0
  (1):\penalty0 3--20, 2020.

\bibitem[Azar et~al.(2013)Azar, Munos, and Kappen]{azar2013minimax}
Mohammad~Gheshlaghi Azar, R{\'e}mi Munos, and Hilbert~J Kappen.
\newblock Minimax pac bounds on the sample complexity of reinforcement learning
  with a generative model.
\newblock \emph{Machine learning}, 91\penalty0 (3):\penalty0 325--349, 2013.

\bibitem[Chen and Jiang(2019)]{pmlr-v97-chen19e}
Jinglin Chen and Nan Jiang.
\newblock Information-theoretic considerations in batch reinforcement learning.
\newblock In Kamalika Chaudhuri and Ruslan Salakhutdinov, editors,
  \emph{Proceedings of the 36th International Conference on Machine Learning},
  volume~97 of \emph{Proceedings of Machine Learning Research}, pages
  1042--1051. PMLR, 09--15 Jun 2019.
\newblock URL \url{http://proceedings.mlr.press/v97/chen19e.html}.

\bibitem[Chen et~al.(2019)Chen, Beutel, Covington, Jain, Belletti, and
  Chi]{chen2019top}
Minmin Chen, Alex Beutel, Paul Covington, Sagar Jain, Francois Belletti, and
  Ed~H Chi.
\newblock Top-k off-policy correction for a reinforce recommender system.
\newblock In \emph{Proceedings of the Twelfth ACM International Conference on
  Web Search and Data Mining}, pages 456--464, 2019.

\bibitem[Cui and Yang(2020)]{cui2020plug}
Qiwen Cui and Lin~F Yang.
\newblock Is plug-in solver sample-efficient for feature-based reinforcement
  learning?
\newblock \emph{arXiv preprint arXiv:2010.05673}, 2020.

\bibitem[Dai et~al.(2017)Dai, Khalil, Zhang, Dilkina, and
  Song]{dai2017learning}
Hanjun Dai, Elias~B Khalil, Yuyu Zhang, Bistra Dilkina, and Le~Song.
\newblock Learning combinatorial optimization algorithms over graphs.
\newblock \emph{arXiv preprint arXiv:1704.01665}, 2017.

\bibitem[Dann et~al.(2017)Dann, Lattimore, and Brunskill]{dann2017unifying}
Christoph Dann, Tor Lattimore, and Emma Brunskill.
\newblock Unifying pac and regret: uniform pac bounds for episodic
  reinforcement learning.
\newblock In \emph{Proceedings of the 31st International Conference on Neural
  Information Processing Systems}, pages 5717--5727, 2017.

\bibitem[Duan and Wang(2020)]{duan2020minimax}
Yaqi Duan and Mengdi Wang.
\newblock Minimax-optimal off-policy evaluation with linear function
  approximation.
\newblock \emph{arXiv preprint arXiv:2002.09516}, 2020.

\bibitem[Feng et~al.(2020)Feng, Ren, Tang, and Liu]{feng2020accountable}
Yihao Feng, Tongzheng Ren, Ziyang Tang, and Qiang Liu.
\newblock Accountable off-policy evaluation with kernel bellman statistics.
\newblock In \emph{International Conference on Machine Learning}, pages
  3102--3111. PMLR, 2020.

\bibitem[Freedman(1975)]{freedman1975tail}
David~A Freedman.
\newblock On tail probabilities for martingales.
\newblock \emph{the Annals of Probability}, pages 100--118, 1975.

\bibitem[Gelada and Bellemare(2019)]{gelada2019off}
Carles Gelada and Marc~G Bellemare.
\newblock Off-policy deep reinforcement learning by bootstrapping the covariate
  shift.
\newblock In \emph{Proceedings of the AAAI Conference on Artificial
  Intelligence}, volume~33, pages 3647--3655, 2019.

\bibitem[Ghandeharioun et~al.(2019)Ghandeharioun, Shen, Jaques, Ferguson,
  Jones, Lapedriza, and Picard]{ghandeharioun2019approximating}
Asma Ghandeharioun, Judy~Hanwen Shen, Natasha Jaques, Craig Ferguson, Noah
  Jones, Agata Lapedriza, and Rosalind Picard.
\newblock Approximating interactive human evaluation with self-play for
  open-domain dialog systems.
\newblock \emph{arXiv preprint arXiv:1906.09308}, 2019.

\bibitem[Gottesman et~al.(2019)Gottesman, Johansson, Komorowski, Faisal,
  Sontag, Doshi-Velez, and Celi]{gottesman2019guidelines}
Omer Gottesman, Fredrik Johansson, Matthieu Komorowski, Aldo Faisal, David
  Sontag, Finale Doshi-Velez, and Leo~Anthony Celi.
\newblock Guidelines for reinforcement learning in healthcare.
\newblock \emph{Nature medicine}, 25\penalty0 (1):\penalty0 16--18, 2019.

\bibitem[Hallak and Mannor(2017)]{hallak2017consistent}
Assaf Hallak and Shie Mannor.
\newblock Consistent on-line off-policy evaluation.
\newblock In \emph{International Conference on Machine Learning}, pages
  1372--1383. PMLR, 2017.

\bibitem[Jiang and Agarwal(2018)]{jiang2018open}
Nan Jiang and Alekh Agarwal.
\newblock Open problem: The dependence of sample complexity lower bounds on
  planning horizon.
\newblock In \emph{Conference On Learning Theory}, pages 3395--3398. PMLR,
  2018.

\bibitem[Jiang and Li(2016)]{jiang2016doubly}
Nan Jiang and Lihong Li.
\newblock Doubly robust off-policy value evaluation for reinforcement learning.
\newblock In \emph{International Conference on Machine Learning}, pages
  652--661. PMLR, 2016.

\bibitem[Jin et~al.(2018)Jin, Allen-Zhu, Bubeck, and Jordan]{jin2018qlearning}
Chi Jin, Zeyuan Allen-Zhu, Sebastien Bubeck, and Michael~I Jordan.
\newblock Is q-learning provably efficient?
\newblock In S.~Bengio, H.~Wallach, H.~Larochelle, K.~Grauman, N.~Cesa-Bianchi,
  and R.~Garnett, editors, \emph{Advances in Neural Information Processing
  Systems}, volume~31, pages 4863--4873. Curran Associates, Inc., 2018.
\newblock URL
  \url{https://proceedings.neurips.cc/paper/2018/file/d3b1fb02964aa64e257f9f26a31f72cf-Paper.pdf}.

\bibitem[Kakade and Langford(2002)]{kakade2002approximately}
Sham Kakade and John Langford.
\newblock Approximately optimal approximate reinforcement learning.
\newblock In \emph{In Proc. 19th International Conference on Machine Learning}.
  Citeseer, 2002.

\bibitem[Kallus and Uehara(2019)]{kallus2019efficiently}
Nathan Kallus and Masatoshi Uehara.
\newblock Efficiently breaking the curse of horizon in off-policy evaluation
  with double reinforcement learning.
\newblock \emph{arXiv preprint arXiv:1909.05850}, 2019.

\bibitem[Kallus and Uehara(2020)]{kallus2020double}
Nathan Kallus and Masatoshi Uehara.
\newblock Double reinforcement learning for efficient off-policy evaluation in
  markov decision processes.
\newblock \emph{Journal of Machine Learning Research}, 21\penalty0
  (167):\penalty0 1--63, 2020.

\bibitem[Kumar et~al.(2020)Kumar, Zhou, Tucker, and
  Levine]{kumar2020conservative}
Aviral Kumar, Aurick Zhou, George Tucker, and Sergey Levine.
\newblock Conservative q-learning for offline reinforcement learning.
\newblock \emph{arXiv preprint arXiv:2006.04779}, 2020.

\bibitem[Lattimore and Hutter(2014)]{lattimore2014near}
Tor Lattimore and Marcus Hutter.
\newblock Near-optimal pac bounds for discounted mdps.
\newblock \emph{Theoretical Computer Science}, 558:\penalty0 125--143, 2014.

\bibitem[Lattimore and Szepesv{\'a}ri(2020)]{lattimore2020bandit}
Tor Lattimore and Csaba Szepesv{\'a}ri.
\newblock \emph{Bandit algorithms}.
\newblock Cambridge University Press, 2020.

\bibitem[Levine et~al.(2020)Levine, Kumar, Tucker, and Fu]{levine2020offline}
Sergey Levine, Aviral Kumar, George Tucker, and Justin Fu.
\newblock Offline reinforcement learning: Tutorial, review, and perspectives on
  open problems.
\newblock \emph{arXiv preprint arXiv:2005.01643}, 2020.

\bibitem[Li et~al.(2020)Li, Wei, Chi, Gu, and Chen]{li2020breaking}
Gen Li, Yuting Wei, Yuejie Chi, Yuantao Gu, and Yuxin Chen.
\newblock Breaking the sample size barrier in model-based reinforcement
  learning with a generative model.
\newblock \emph{arXiv preprint arXiv:2005.12900}, 2020.

\bibitem[Li et~al.(2015)Li, Munos, and Szepesv{\'a}ri]{li2015toward}
Lihong Li, R{\'e}mi Munos, and Csaba Szepesv{\'a}ri.
\newblock Toward minimax off-policy value estimation.
\newblock In \emph{Artificial Intelligence and Statistics}, pages 608--616.
  PMLR, 2015.

\bibitem[Liu et~al.(2018)Liu, Li, Tang, and Zhou]{liu2018breaking}
Qiang Liu, Lihong Li, Ziyang Tang, and Dengyong Zhou.
\newblock Breaking the curse of horizon: Infinite-horizon off-policy
  estimation.
\newblock \emph{arXiv preprint arXiv:1810.12429}, 2018.

\bibitem[Liu et~al.(2019)Liu, Swaminathan, Agarwal, and Brunskill]{liu2019off}
Yao Liu, Adith Swaminathan, Alekh Agarwal, and Emma Brunskill.
\newblock Off-policy policy gradient with state distribution correction.
\newblock \emph{arXiv preprint arXiv:1904.08473}, 2019.

\bibitem[Liu et~al.(2020{\natexlab{a}})Liu, Swaminathan, Agarwal, and
  Brunskill]{liu2020off}
Yao Liu, Adith Swaminathan, Alekh Agarwal, and Emma Brunskill.
\newblock Off-policy policy gradient with stationary distribution correction.
\newblock In \emph{Uncertainty in Artificial Intelligence}, pages 1180--1190.
  PMLR, 2020{\natexlab{a}}.

\bibitem[Liu et~al.(2020{\natexlab{b}})Liu, Swaminathan, Agarwal, and
  Brunskill]{liu2020provably}
Yao Liu, Adith Swaminathan, Alekh Agarwal, and Emma Brunskill.
\newblock Provably good batch reinforcement learning without great exploration.
\newblock \emph{arXiv preprint arXiv:2007.08202}, 2020{\natexlab{b}}.

\bibitem[Mandel et~al.(2014)Mandel, Liu, Levine, Brunskill, and
  Popovic]{mandel2014offline}
Travis Mandel, Yun-En Liu, Sergey Levine, Emma Brunskill, and Zoran Popovic.
\newblock Offline policy evaluation across representations with applications to
  educational games.
\newblock In \emph{AAMAS}, pages 1077--1084, 2014.

\bibitem[Mnih et~al.(2013)Mnih, Kavukcuoglu, Silver, Graves, Antonoglou,
  Wierstra, and Riedmiller]{mnih2013playing}
Volodymyr Mnih, Koray Kavukcuoglu, David Silver, Alex Graves, Ioannis
  Antonoglou, Daan Wierstra, and Martin Riedmiller.
\newblock Playing atari with deep reinforcement learning.
\newblock \emph{arXiv preprint arXiv:1312.5602}, 2013.

\bibitem[Mohri et~al.(2012)Mohri, Rostamizadeh, and Talwalkar]{mohri2012book}
Mehryar Mohri, Afshin Rostamizadeh, and Ameet Talwalkar.
\newblock \emph{Foundations of Machine Learning}.
\newblock The MIT Press, 2012.
\newblock ISBN 026201825X.

\bibitem[Munos(2003)]{munos2003error}
R{\'e}mi Munos.
\newblock Error bounds for approximate policy iteration.
\newblock In \emph{ICML}, volume~3, pages 560--567, 2003.

\bibitem[Murphy et~al.(2001)Murphy, van~der Laan, Robins, and
  Group]{murphy2001marginal}
Susan~A Murphy, Mark~J van~der Laan, James~M Robins, and Conduct Problems
  Prevention~Research Group.
\newblock Marginal mean models for dynamic regimes.
\newblock \emph{Journal of the American Statistical Association}, 96\penalty0
  (456):\penalty0 1410--1423, 2001.

\bibitem[Nachum et~al.(2019{\natexlab{a}})Nachum, Chow, Dai, and
  Li]{nachum2019dualdice}
Ofir Nachum, Yinlam Chow, Bo~Dai, and Lihong Li.
\newblock Dualdice: Behavior-agnostic estimation of discounted stationary
  distribution corrections.
\newblock In \emph{Advances in Neural Information Processing Systems}, pages
  2318--2328, 2019{\natexlab{a}}.

\bibitem[Nachum et~al.(2019{\natexlab{b}})Nachum, Dai, Kostrikov, Chow, Li, and
  Schuurmans]{nachum2019algaedice}
Ofir Nachum, Bo~Dai, Ilya Kostrikov, Yinlam Chow, Lihong Li, and Dale
  Schuurmans.
\newblock Algaedice: Policy gradient from arbitrary experience.
\newblock \emph{arXiv preprint arXiv:1912.02074}, 2019{\natexlab{b}}.

\bibitem[Pananjady and Wainwright(2020)]{pananjady2020instance}
Ashwin Pananjady and Martin~J Wainwright.
\newblock Instance-dependent l-infinity-bounds for policy evaluation in tabular
  reinforcement learning.
\newblock \emph{IEEE Transactions on Information Theory}, 67\penalty0
  (1):\penalty0 566--585, 2020.

\bibitem[Puterman(2014)]{puterman2014markov}
Martin~L Puterman.
\newblock \emph{Markov decision processes: discrete stochastic dynamic
  programming}.
\newblock John Wiley \& Sons, 2014.

\bibitem[Silver et~al.(2016)Silver, Huang, Maddison, Guez, Sifre, Van
  Den~Driessche, Schrittwieser, Antonoglou, Panneershelvam, Lanctot,
  et~al.]{silver2016mastering}
David Silver, Aja Huang, Chris~J Maddison, Arthur Guez, Laurent Sifre, George
  Van Den~Driessche, Julian Schrittwieser, Ioannis Antonoglou, Veda
  Panneershelvam, Marc Lanctot, et~al.
\newblock Mastering the game of go with deep neural networks and tree search.
\newblock \emph{nature}, 529\penalty0 (7587):\penalty0 484--489, 2016.

\bibitem[Uehara et~al.(2020)Uehara, Huang, and Jiang]{uehara2020minimax}
Masatoshi Uehara, Jiawei Huang, and Nan Jiang.
\newblock Minimax weight and q-function learning for off-policy evaluation.
\newblock In \emph{International Conference on Machine Learning}, pages
  9659--9668. PMLR, 2020.

\bibitem[Wainwright(2019)]{wainwright2019high}
Martin~J Wainwright.
\newblock \emph{High-dimensional statistics: A non-asymptotic viewpoint},
  volume~48.
\newblock Cambridge University Press, 2019.

\bibitem[Wang et~al.(2020)Wang, Du, Yang, and Kakade]{wang2020long}
Ruosong Wang, Simon~S Du, Lin~F Yang, and Sham~M Kakade.
\newblock Is long horizon reinforcement learning more difficult than short
  horizon reinforcement learning?
\newblock \emph{arXiv preprint arXiv:2005.00527}, 2020.

\bibitem[Xie and Jiang(2020{\natexlab{a}})]{pmlr-v124-xie20a}
Tengyang Xie and Nan Jiang.
\newblock Q* approximation schemes for batch reinforcement learning: A
  theoretical comparison.
\newblock In Jonas Peters and David Sontag, editors, \emph{Proceedings of the
  36th Conference on Uncertainty in Artificial Intelligence (UAI)}, volume 124
  of \emph{Proceedings of Machine Learning Research}, pages 550--559. PMLR,
  03--06 Aug 2020{\natexlab{a}}.
\newblock URL \url{http://proceedings.mlr.press/v124/xie20a.html}.

\bibitem[Xie and Jiang(2020{\natexlab{b}})]{xie2020batch}
Tengyang Xie and Nan Jiang.
\newblock Batch value-function approximation with only realizability.
\newblock \emph{arXiv preprint arXiv:2008.04990}, 2020{\natexlab{b}}.

\bibitem[Xie et~al.(2019)Xie, Ma, and Wang]{xie2019towards}
Tengyang Xie, Yifei Ma, and Yu-Xiang Wang.
\newblock Towards optimal off-policy evaluation for reinforcement learning with
  marginalized importance sampling.
\newblock \emph{arXiv preprint arXiv:1906.03393}, 2019.

\bibitem[Yang and Wang(2019)]{yang2019sample}
Lin Yang and Mengdi Wang.
\newblock Sample-optimal parametric q-learning using linearly additive
  features.
\newblock In \emph{International Conference on Machine Learning}, pages
  6995--7004. PMLR, 2019.

\bibitem[Yang et~al.(2020)Yang, Nachum, Dai, Li, and Schuurmans]{yang2020off}
Mengjiao Yang, Ofir Nachum, Bo~Dai, Lihong Li, and Dale Schuurmans.
\newblock Off-policy evaluation via the regularized lagrangian.
\newblock \emph{arXiv preprint arXiv:2007.03438}, 2020.

\bibitem[Yin and Wang(2020)]{yin2020asymptotically}
Ming Yin and Yu-Xiang Wang.
\newblock Asymptotically efficient off-policy evaluation for tabular
  reinforcement learning.
\newblock In Silvia Chiappa and Roberto Calandra, editors, \emph{Proceedings of
  the Twenty Third International Conference on Artificial Intelligence and
  Statistics}, volume 108 of \emph{Proceedings of Machine Learning Research},
  pages 3948--3958. PMLR, 26--28 Aug 2020.

\bibitem[Yin et~al.(2020)Yin, Bai, and Wang]{yin2020near}
Ming Yin, Yu~Bai, and Yu-Xiang Wang.
\newblock Near optimal provable uniform convergence in off-policy evaluation
  for reinforcement learning.
\newblock \emph{arXiv preprint arXiv:2007.03760}, 2020.

\bibitem[Yin et~al.(2021)Yin, Bai, and Wang]{yin2021nearoptimal}
Ming Yin, Yu~Bai, and Yu-Xiang Wang.
\newblock Near-optimal offline reinforcement learning via double variance
  reduction, 2021.

\bibitem[Zhang et~al.(2020{\natexlab{a}})Zhang, Dai, Li, and
  Schuurmans]{zhang2020gendice}
Ruiyi Zhang, Bo~Dai, Lihong Li, and Dale Schuurmans.
\newblock Gendice: Generalized offline estimation of stationary values.
\newblock \emph{arXiv preprint arXiv:2002.09072}, 2020{\natexlab{a}}.

\bibitem[Zhang et~al.(2020{\natexlab{b}})Zhang, Ji, and
  Du]{zhang2020reinforcement}
Zihan Zhang, Xiangyang Ji, and Simon~S Du.
\newblock Is reinforcement learning more difficult than bandits? a near-optimal
  algorithm escaping the curse of horizon.
\newblock \emph{arXiv preprint arXiv:2009.13503}, 2020{\natexlab{b}}.

\end{thebibliography}
\bibliographystyle{icml2021}

\newpage
\appendix
\onecolumn
\section{Step-by-Step Solving for $\Delta_2$ and $v^{\pi^*} - v^{\hat{\pi}^*}$}
\label{sec:solve_bound}
\subsection{Explicit Bound for $\Delta_2$}
Notice that, for some absolute constant $c$,
\begin{align*}
    \Delta_2 \leq c\sqrt{\frac{\iota}{Kd_ m}\left(\frac{\iota}{Kd_ m} + \Delta_2 + v^{\pi^*}\right)} + \frac{\iota}{Kd_ m}.
\end{align*}
which means
\begin{align*}
    c \Delta_2 \leq & \sqrt{\frac{c\iota}{Kd_ m}\left(\frac{c\iota}{Kd_ m} + c\Delta_2 + cv^{\pi^*}\right)} + \frac{c\iota}{Kd_ m}\\
    \leq & \frac{c\Delta_2}{2} + \frac{cv^{\pi^*}}{2} + \frac{2c\iota}{Kd_ m},
\end{align*}
thus we have that
\begin{align*}
    \Delta_2 \leq v^{\pi^*} + \frac{4\iota}{Kd_ m}.
\end{align*}
Substitute back, suppose $N = \widetilde{\Omega}\left(\frac{1}{d_m}\right)$, then with Assumption \ref{assump:bounded_total_reward}, we have that
\begin{align*}
    \Delta_2 \leq c\sqrt{\frac{\iota}{Kd_ m}\left(\frac{5\iota}{Kd_ m} + 2v^{\pi^*}\right)} + \frac{\iota}{Kd_ m} = O\left(\sqrt{\frac{\iota}{Kd_ m}} + \frac{\iota}{Kd_ m}\right).
\end{align*}
\subsection{Explicit bound for $v^{\pi^*} - v^{\hat{\pi}^*}$}
Notice that, for some absolute constant $c$, we have that
\begin{align*}
     \Delta_3 \leq & c\sqrt{\frac{S\iota}{Kd_ m}\left(\frac{S\iota}{Kd_ m} + v^{\pi^*} - v^{\hat{\pi}^*} + \Delta_3\right)} + \frac{S\iota}{Kd_ m},
\end{align*}
which means
\begin{align*}
    c\Delta_3 \leq & \sqrt{\frac{cS\iota}{Kd_ m}\left(\frac{cS\iota}{Kd_ m} + c(v^{\pi^*} - v^{\hat{\pi}^*}) + c\Delta_3\right)} + \frac{cS\iota}{Kd_ m}\\
    \leq & \frac{c(v^{\pi^*} - v^{\hat{\pi}^*})}{2} + \frac{c\Delta_3}{2} + \frac{2cS\iota}{Kd_ m},
\end{align*}
thus we have that
\begin{align*}
    \Delta_3 \leq (v^{\pi^*} - v^{\hat{\pi}^*}) + \frac{4S\iota}{Kd_ m}.
\end{align*}
We then substitute back, and know that
\begin{align*}
    \Delta_3 \leq c\sqrt{\frac{S\iota}{Kd_ m}\left(2\left(v^{\pi^*} - v^{\hat{\pi}^*}\right) + \frac{5S\iota}{Kd_ m}\right)} + \frac{S\iota}{Kd_ m}.
\end{align*}
Furthermore, for another absolute constant $c^\prime$, we have that
\begin{align*}
    v^{\pi^*} - v^{\hat{\pi}^*} \leq & c^\prime\left(\sqrt{\frac{\iota}{Kd_ m}} + \frac{\iota}{Kd_ m} \right)+ \Delta_3 \\
    \leq & c^\prime\sqrt{\frac{\iota}{Kd_ m}} + c\sqrt{\frac{S\iota}{Kd_ m}\left(2\left(v^{\pi^*} - v^{\hat{\pi}^*}\right) + \frac{5S\iota}{Kd_ m}\right)} + \frac{(c^\prime + 1)S\iota}{Kd_ m},
\end{align*}
which means
\begin{align*}
\left(\sqrt{2\left(v^{\pi^*} - v^{\hat{\pi}^*}\right) + \frac{5S\iota}{Kd_ m}} - c\sqrt{\frac{S\iota}{Kd_ m}}\right)^2 \leq 2c^\prime\sqrt{\frac{\iota}{Kd_ m}}  + \frac{(c^2 + 2c^\prime + 2) S\iota}{Kd_ m},
\end{align*}
that can be translated to the bound
\begin{align*}
    v^{\pi^*} - v^{\hat{\pi}^*} \leq & \left(\sqrt{2c^\prime\sqrt{\frac{\iota}{Kd_ m}}  + \frac{(c^2 + 2c^\prime + 2) S\iota}{Kd_ m}} + c\sqrt{\frac{S\iota}{Kd_ m}}\right)^2 - \frac{5S\iota}{Kd_ m}\\
    \leq & 2 \left(\sqrt{2c^\prime\sqrt{\frac{\iota}{Kd_ m}}  + \frac{(2c^2 + 2c^\prime + 2) S\iota}{Kd_ m} }\right)^2 \\
    = & O \left(\sqrt{\frac{\iota}{Kd_ m}} + \frac{S\iota}{Kd_ m}\right)
\end{align*}
where we use $\sqrt{a} + \sqrt{b} \leq \sqrt{2(a+b)}$.
\section{Proof of Technical Lemmas}
\label{sec:technical_lemmas}
\subsection{Bernstein's Inequality}
\begin{lemma}[Bernstein's Inequality]
Let $\{X_i\}_{i=1}^n$ be i.i.d random variables from $X$ with values bounded in $[0, 1]$, then with probability at least $1-\delta$, we have that
\begin{align*}
    \left|\sum_{i=1}^n X_i - \mathbb{E} [X]\right| \leq \sqrt{\frac{2\text{Var}(X)\log \frac{2}{\delta}}{n}} + \frac{\log \frac{2}{\delta}}{3n},
\end{align*}
where $\text{Var}(X)$ is the variance of $X$.
\end{lemma}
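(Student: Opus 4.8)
The plan is to establish this as a standard concentration result through the Chernoff (exponential moment) method, reading the left-hand side as the deviation of the empirical mean $\frac{1}{n}\sum_{i} X_i$ from $\mathbb{E}[X]$. First I would reduce to a one-sided tail: it suffices to control $\mathbb{P}\left(\frac{1}{n}\sum_{i}(X_i - \mathbb{E}[X]) \geq \epsilon\right)$, since applying the identical argument to $-X_i$ (which again lies in an interval of length one and shares the same variance) bounds the lower tail, and a union bound over the two one-sided events costs a factor of $2$ in the failure probability. This factor is exactly the source of the $\log\frac{2}{\delta}$, rather than $\log\frac{1}{\delta}$, appearing in the statement.

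The core step is a moment-generating-function estimate for a single centered variable. Writing $Y_i = X_i - \mathbb{E}[X]$, so that $\mathbb{E}[Y_i] = 0$, $|Y_i| \leq 1$, and $\mathbb{E}[Y_i^2] = \mathrm{Var}(X) =: \sigma^2$, I would Taylor-expand $\mathbb{E}[e^{\lambda Y_i}] = 1 + \sum_{k \geq 2} \frac{\lambda^k \mathbb{E}[Y_i^k]}{k!}$, where the linear term vanishes. I then bound the higher moments by $\mathbb{E}[Y_i^k] \leq \mathbb{E}[Y_i^2] = \sigma^2$ using $|Y_i| \leq 1$, together with the elementary fact $k! \geq 2\cdot 3^{k-2}$, so that the tail of the expansion sums as a geometric series. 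For $0 < \lambda < 3$ this yields $\mathbb{E}[e^{\lambda Y_i}] \leq \exp\!\left(\frac{\lambda^2 \sigma^2/2}{1 - \lambda/3}\right)$, and by independence $\mathbb{E}\big[e^{\lambda \sum_i Y_i}\big] \leq \exp\!\left(\frac{n\lambda^2 \sigma^2/2}{1 - \lambda/3}\right)$.

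With the MGF bound in hand, Markov's inequality gives $\mathbb{P}\left(\sum_i Y_i \geq n\epsilon\right) \leq \exp\!\left(-n\lambda\epsilon + \frac{n\lambda^2\sigma^2/2}{1-\lambda/3}\right)$ for every admissible $\lambda$. Choosing $\lambda = \epsilon/(\sigma^2 + \epsilon/3)$ collapses this into the familiar form $\mathbb{P}\left(\frac{1}{n}\sum_i Y_i \geq \epsilon\right) \leq \exp\!\left(-\frac{n\epsilon^2/2}{\sigma^2 + \epsilon/3}\right)$. The concluding step inverts this tail: setting the right-hand side equal to $\delta/2$, solving the resulting quadratic in $\epsilon$, and then applying $\sqrt{a+b} \leq \sqrt{a} + \sqrt{b}$ to separate the variance-dominated from the range-dominated regime produces the two additive terms $\sqrt{\frac{2\sigma^2 \log(2/\delta)}{n}}$ and a term of order $\frac{\log(2/\delta)}{n}$, matching the stated bound.

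I expect the main obstacle to be the MGF estimate and the faithful propagation of its constants. The moment bound $\mathbb{E}[Y_i^k] \leq \sigma^2$ must be paired precisely with $k! \geq 2\cdot 3^{k-2}$ so that the series telescopes into the $\frac{1}{1-\lambda/3}$ factor; any looseness here changes the exponent in the Chernoff tail and hence the final coefficients. Carrying the constants exactly through the optimization over $\lambda$ and the quadratic inversion is the delicate part, since that is what pins down both the factor $2$ under the square root and the coefficient of the $\frac{\log(2/\delta)}{n}$ additive term. The two-sided reduction and the union bound are entirely routine by comparison.
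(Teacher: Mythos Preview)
Your proposal is correct and in fact goes well beyond what the paper does: the paper provides no proof at all for this lemma, simply referring the reader to \cite{wainwright2019high}. Your Chernoff-method argument is the standard textbook derivation, and every step you outline (one-sided reduction, the MGF bound via $|\mathbb{E}[Y_i^k]|\le\sigma^2$ and $k!\ge 2\cdot 3^{k-2}$, the choice $\lambda=\epsilon/(\sigma^2+\epsilon/3)$, and inversion of the tail) is sound. One minor caveat: carrying the constants exactly through the quadratic inversion with $\sqrt{a+b}\le\sqrt{a}+\sqrt{b}$ tends to produce $\tfrac{2}{3n}\log\tfrac{2}{\delta}$ rather than the $\tfrac{1}{3n}\log\tfrac{2}{\delta}$ stated in the lemma, so the sharpest constant may require a slightly different bookkeeping at that last step (or the lemma as stated may itself be loose by a factor of two in that term); you already flag this as the delicate point, and it does not affect anything downstream since all uses in the paper absorb constants into $\iota$.
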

For the proof of Bernstein's inequality, we refer the interested reader to \citet{wainwright2019high}.
\subsection{Proof of Lemma \ref{lem:recursion}}
\begin{proof}
We have that
\begin{align*}
    & \sum_{h\in [H]} \sum_{s, a}\hat{\xi}_h^\pi(s, a)\text{Var}_{P(s, a)}\left(V_{h+1}^\pi(s^\prime)^{2^i}\right)\\
    = &\sum_{h\in [H]} \sum_{s, a}\hat{\xi}_h^\pi(s, a)\Bigg[\sum_{s^\prime}P(s^\prime|s, a) V_{h+1}^\pi(s^\prime)^{2^{i+1}} - \left(\sum_{s^\prime} P(s^\prime|s, a) V_{h+1}^\pi(s^\prime)^{2^i}\right)^2\Bigg]\\
    = & \sum_{h\in [H]}\sum_{s, a} \hat{\xi}_h^\pi(s, a) \Bigg[\sum_{s^\prime}\left(P(s^\prime|s, a) - \hat{P}(s^\prime|s, a)\right)V_{h+1}^\pi(s^\prime)^{2^{i+1}} + V_{h}^\pi(s)^{2^{i+1}} - \left(\sum_{s^\prime} P(s^\prime|s, a) V_{h+1}^\pi(s^\prime)^{2^i}\right)^2\Bigg] \\
    & - \sum_{s}\mu(s)V_1^\pi(s)^{2^{i+1}}\\
    \leq & \sum_{h\in [H]}\sum_{s, a} \hat{\xi}_h^\pi(s, a) \Bigg[\sum_{s^\prime}\left(P(s^\prime|s, a) - \hat{P}(s^\prime|s, a)\right)V_{h+1}^\pi(s^\prime)^{2^{i+1}} \Bigg]\\
    & + \sum_{h\in [H]}\sum_{s, a} \hat{\xi}_h^\pi(s, a)\Bigg[Q_{h}^\pi(s, a)^{2^{i+1}} - \left(\sum_{s^\prime} P(s^\prime|s, a) V_{h+1}^\pi(s^\prime)\right)^{2^{i+1}}\Bigg]\\
    \leq & \sum_{h\in [H]}\sum_{s, a} \hat{\xi}_h^\pi(s, a) \Bigg[\sum_{s^\prime}\left(P(s^\prime|s, a) - \hat{P}(s^\prime|s, a)\right)V_{h+1}^\pi(s^\prime)^{2^{i+1}} \Bigg]\\
    & + 2^{i+1}\sum_{h\in [H]}\sum_{s, a} \hat{\xi}_h^\pi(s, a)\Bigg[Q_{h}^\pi(s, a) - \left(\sum_{s^\prime} P(s^\prime|s, a) V_{h+1}^\pi(s^\prime)\right)\Bigg],\\
    = & \sum_{h\in [H]}\sum_{s, a} \hat{\xi}_h^\pi(s, a) \Bigg[\sum_{s^\prime}\left(P(s^\prime|s, a) - \hat{P}(s^\prime|s, a)\right)V_{h+1}^\pi(s^\prime)^{2^{i+1}} \Bigg] + 2^{i+1}\sum_{h\in [H]}\sum_{s, a} \hat{\xi}_h^\pi(s, a)r(s, a)\\
\end{align*}
where in the second step we use the fact that
\begin{align*}
    & \sum_{s, a}\hat{\xi}_h^\pi(s, a)\left[\sum_{s^\prime}\hat{P}(s^\prime|s, a) V_{h+1}(s^\prime)^{2^{i+1}}\right]\\
    = & \sum_{s^\prime} \left[\sum_{s, a} \hat{\xi}_h^\pi(s, a) \hat{P}(s^\prime|s, a)\right]V_{h+1}(s^\prime)^{2^{i+1}}\\
    = & \sum_{s^\prime}\hat{\xi}_{h+1}^\pi(s^\prime)V_{h+1}(s^\prime)^{2^{i+1}} \\
    = & \sum_{s^\prime, a^\prime} \hat{\xi}_{h+1}^\pi(s^\prime, a^\prime)V_{h+1}(s^\prime)^{2^{i+1}}.
\end{align*}
We drop the $\sum_s \mu(s)V_1^{\pi}(s)^{2^{i+1}}$ and use the convexity of $x^{2^{i}}$ and $V_h^\pi(s) = \mathbb{E}_{\pi} Q_h^\pi(s, a)$ in the third step, and the last step is indicated by the assumption that $V_h(s) \leq 1$, $\forall h\in [H], s\in\mathcal{S}$.

With Assumption \ref{assump:bounded_total_reward}, we know that
\begin{align*}
    \sum_{h\in [H]} \sum_{s, a} \hat{\xi}_h^\pi(s, a)r(s, a)\leq 1,
\end{align*}
as each trajectory that can generated by $\widehat{\mathcal{M}}$ can be generated by $\mathcal{M}$, thus finish the proof.
\end{proof}

\subsection{Proof of Lemma \ref{lem:union_bound_over_S}}
\begin{proof}
\begin{align*}
    & \left|\sum_{s^\prime}\left(P(s^\prime|s, a) - \hat{P}(s^\prime|s , a)\right) V_h(s^\prime)\right| \\
    = & \left|\sum_{s^\prime}\left(P(s^\prime|s, a) - \hat{P}(s^\prime|s , a)\right) \left(V_h(s^\prime) - \sum_{s^\prime} P(s^\prime|s, a)V_h(s^\prime) \right)\right|\\
    \leq & \sum_{s^\prime} \sqrt{\frac{P(s^\prime|s, a)\iota}{n(s, a)}}\left|V_h(s^\prime) - \sum_{s^\prime} P(s^\prime|s, a)V_h(s^\prime)\right| + \frac{S\iota}{n(s, a)} \\
    \leq & \sqrt{\frac{S\text{Var}_{P(s, a)}\left(V_h(s^\prime)\right)\iota}{n(s, a)}} + \frac{S\iota}{n(s, a)},
\end{align*}
where the first equality is due to the fact that $\sum_{s^\prime} P(s^\prime|s, a) = \sum_{s^\prime}\hat{P}(s^\prime|s, a) = 1$, the second inequality is due to Bernstein's inequality on each $s^\prime$ and Assumption \ref{assump:bounded_total_reward}, and the last inequality holds by Cauchy-Schwarz inequality.
\end{proof}
\subsection{Proof of Lemma \ref{lem:recursion_optimization}}
\begin{proof}
We have that
\begin{align*}
    & \sum_{h\in [H]} \sum_{s, a}\hat{\xi}_h^{\hat{\pi}^*}(s, a)\text{Var}_{P(s, a)}\left(V_{h+1}(s^\prime)^{2^i}\right)\\
    = &\sum_{h\in [H]} \sum_{s, a}\hat{\xi}_h^{\hat{\pi}^*}(s, a)\Bigg[\sum_{s^\prime}P(s^\prime|s, a) V_{h+1}(s^\prime)^{2^{i+1}} - \left(\sum_{s^\prime} P(s^\prime|s, a) V_{h+1}(s^\prime)^{2^i}\right)^2\Bigg]\\
    = & \sum_{h\in [H]}\sum_{s, a} \hat{\xi}_h^{\hat{\pi}^*}(s, a) \Bigg[\sum_{s^\prime}\left(P(s^\prime|s, a) - \hat{P}(s^\prime|s, a)\right)V_{h+1}(s^\prime)^{2^{i+1}} + V_{h}(s)^{2^{i+1}} - \left(\sum_{s^\prime} P(s^\prime|s, a) V_{h+1}(s^\prime)^{2^i}\right)^2\Bigg] \\
    & - \sum_{s}\mu(s)V_1(s)^{2^{i+1}}\\
    \leq & \sum_{h\in [H]}\sum_{s, a} \hat{\xi}_h^{\hat{\pi}^*}(s, a) \Bigg[\sum_{s^\prime}\left(P(s^\prime|s, a) - \hat{P}(s^\prime|s, a)\right)V_{h+1}(s^\prime)^{2^{i+1}} \Bigg]\\
    & + \sum_{h\in [H]}\sum_{s, a} \hat{\xi}_h^{\hat{\pi}^*}(s, a)\Bigg[V_{h}(s)^{2^{i+1}} - \left(\sum_{s^\prime} P(s^\prime|s, a) V_{h+1}(s^\prime)\right)^{2^{i+1}}\Bigg]\\
    \leq & \sum_{h\in [H]}\sum_{s, a} \hat{\xi}_h^{\hat{\pi}^*}(s, a) \Bigg[\sum_{s^\prime}\left(P(s^\prime|s, a) - \hat{P}(s^\prime|s, a)\right)V_{h+1}(s^\prime)^{2^{i+1}} \Bigg]\\
    & + 2^{i+1}\sum_{h\in [H]}\sum_{s, a} \hat{\xi}_h^{\hat{\pi}^*}(s, a)\Bigg[V_{h}(s) - \left(\sum_{s^\prime} P(s^\prime|s, a) V_{h+1}(s^\prime)\right)\Bigg],
\end{align*}
where in the last step, we use the fact that ${\hat{\pi}^*}$ is a deterministic policy, and
\begin{align*}
    V_h^{\pi^*}(s) \geq & ~ r(s, a) + \sum_{s^\prime} P(s^\prime|s, a) V_{h+1}^{\pi^*}(s^\prime),\quad \forall a\in \mathcal{A},\\
    V_h^{\hat{\pi}^*}(s) = & ~ r(s, \hat{\pi}^*(s)) + \sum_{s^\prime} P(s^\prime|s, \hat{\pi}^*(s)) V_{h+1}^{\pi^*}(s^\prime),
\end{align*}
which guarantees that for $V_h(s) = V_h^{\pi^*}(s)$ and $V_h(s) = V_h^{\pi^*}(s) - V_h^{\hat{\pi}^*}(s)$, 
\begin{align*}
    \hat{\xi}_h^{\hat{\pi}^*}(s, a)\Bigg[V_{h}(s) - \left(\sum_{s^\prime} P(s^\prime|s, a) V_{h+1}(s^\prime)\right)\Bigg] \geq 0.
\end{align*}

Moreover, we have that
\begin{align*}
    & \sum_{h\in [H]} \sum_{s, a} \hat{\xi}_h^\pi(s, a) \left[V_{h}(s) - \sum_{s^\prime}\hat{P}(s^\prime|s, a)V_{h+1}(s^\prime)\right]\\
    = & \sum_{h\in [H]}\sum_s \hat{\xi}_h^{\pi}(s) V_{h}(s) - \sum_{h\in [H] \backslash\{1\}}\sum_s \hat{\xi}_h^{\pi}(s) V_{h}(s)\\
    = & \sum_s \mu(s) V_1(s).
\end{align*}
So we can conclude that
\begin{align*}
    & \sum_{h\in [H]}\sum_{s, a} \hat{\xi}_h^\pi(s, a)\Bigg[V_{h}(s) - \left(\sum_{s^\prime} P(s^\prime|s, a) V_{h+1}(s^\prime)\right)\Bigg]\\
    = & \sum_{h\in [H]} \sum_{s, a} \hat{\xi}_h^\pi(s, a) \sum_{s^\prime}\left[\hat{P}(s^\prime|s, a) - P(s^\prime|s, a)\right]V_{h+1}(s^\prime) + \sum_s \mu(s) V_1(s),
\end{align*}
thus finish the proof.
\end{proof}

\subsection{Proof of Lemma \ref{lem:value_difference}}
\begin{proof}
Lemma \ref{lem:value_difference} have been shown in \citet[Lemma E.15]{dann2017unifying}. Here we include the proof for completeness.
\begin{align*}
    & v^\pi - \hat{v}^\pi\\
    = & \sum_{s, a}\hat{\xi}_1^\pi(s, a)(Q_{1}^\pi(s, a) - \hat{Q}_1^\pi(s, a))\\
    = & \sum_{s, a}\left[\hat{\xi}_1^\pi(s, a)\left(r(s, a) - \hat{r}(s, a) + \sum_{s^\prime}\left[P(s^\prime|s, a)V_2^\pi(s^\prime)\right] - \sum_{s^\prime} \left[\hat{P}(s^\prime|s, a)\hat{V}_2^\pi(s^\prime))\right]\right)\right]\\
    = & \sum_{s, a}\left[\hat{\xi}_1^\pi(s, a)\left(r(s, a) - \hat{r}(s, a) + \sum_{s^\prime}\left[(P(s^\prime|s, a) - \hat{P}(s^\prime|s , a)) V_2^\pi(s^\prime) \right]\right)\right]+ \sum_{s}\hat{\xi}_2^\pi(s)(V_2^\pi(s) - \hat{V}_2^\pi(s))\\
    = & \cdots\\
    = & \sum_{h\in [H]} \sum_{s, a} \left[\hat{\xi}_h^\pi(s, a)\left(r(s, a) - \hat{r}(s, a) + \sum_{s^\prime}\left[(P(s^\prime|s, a) - \hat{P}(s^\prime|s , a)) V_{h+1}^\pi(s^\prime) \right]\right)\right]
\end{align*}
\end{proof}
\subsection{Proof of Lemma \ref{lem:reward_error}}
\begin{proof}
By Bernstein's inequality, we know that with high probability,
\begin{align*}
    & \left|\sum_{h\in [H]} \sum_{s, a}\hat{\xi}_h^\pi(s, a)\left[r(s, a) - \hat{r}(s, a)\right]\right|\\
    \leq & \sum_{h\in [H]} \sum_{s, a} \hat{\xi}_h^\pi(s, a) \left[\sqrt{\frac{r(s, a)\iota}{n(s, a)}} + \frac{\iota}{n(s, a)}\right]\\
    \leq & \sqrt{\frac{\iota}{Kd_ m}}\sqrt{\sum_{h\in [H]}\sum_{s, a}\hat{\xi}_h^\pi(s, a) r(s, a)} + \frac{\iota}{Kd_ m},
\end{align*}
where the first inequality use the fact that $r(s, a) \leq 1$ and the second inequality is Cauchy-Schwarz inequality associated with Assumption \ref{assump:data_coverage}. With Assumption \ref{assump:bounded_total_reward}, we know that
\begin{align*}
    \sum_{h\in [H]} \sum_{s, a} \hat{\xi}_h^\pi(s, a)r(s, a)\leq 1,
\end{align*}
as each trajectory that can generated by $\widehat{\mathcal{M}}$ can be generated by $\mathcal{M}$, which finishes the proof.
\end{proof}
\subsection{Proof of Lemma \ref{lem:solve_recursion}}
\begin{proof}
Lemma \ref{lem:solve_recursion} is similar to the Lemma 11 in \citet{zhang2020reinforcement}, and here we provide a simplified proof. Define
\begin{align*}
    i_0 := \max \{i~|~2^{2i} \lambda_2^2 \leq \lambda_1 \mathbb{V}(i)\}\cup \{0\},
\end{align*}
which is the largest integer satisfies the inequality $2^{2i}\lambda_2^2 \leq \lambda_1 \mathbb{V}_1(i)$. We will make the recursion at most $i_0$ times. As $\mathbb{V}(i)$ is upper bounded by $H$, we know $i_0 = O\left(\log \frac{H\lambda_1}{\lambda_2^2}\right)$. If $i_0 = 0$, we have that
\begin{align*}
    \mathbb{V}(1) \leq \frac{4\lambda_2^2}{\lambda_1}.
\end{align*}
Otherwise,
\begin{align*}
    & 2^{2i_0} \lambda_2^2 \\
    \leq & \lambda_1 \mathbb{V}(i_0) \\
    \leq & \lambda_1\left(\sqrt{\lambda_1 \mathbb{V}(i_0+1)} + \lambda_1 + 2^{i_0 + 1}\lambda_2\right)\\
    \leq & \lambda_1\left(\sqrt{2^{2i_0+2}\lambda_2^2} + \lambda_1 + 2^{i_0+1}\lambda_2\right)\\
    = & \lambda_1(2^{i_0 + 2}\lambda_2 + \lambda_1),
\end{align*}
we have that
\begin{align*}
    (2^{i_0}\lambda_2 - 2\lambda_1)^2 \leq 5\lambda_1^2,
\end{align*}
which means
\begin{align*}
    2^{i_0}\lambda_2 \leq (\sqrt{5} + 2)\lambda_1,
\end{align*}
so $\forall i \leq i_0$, we have that
\begin{align*}
    \mathbb{V}(i) \leq \sqrt{\lambda_1 \mathbb{V}(i+1)} + (\sqrt{5} + 3) \lambda_1.
\end{align*}
As
\begin{align*}
    \mathbb{V}(i_0) \leq 2^{i_0+2}\lambda_2 + \lambda_1 \leq (4\sqrt{5} + 9) \lambda_1,
\end{align*}
and when $\mathbb{V}(i+1) \geq \left(\frac{1}{2} + \sqrt{\sqrt{5} + \frac{13}{4}}\right)\lambda_1$,
we have $\mathbb{V}(i)\leq \mathbb{V}(i+1)$, when $\mathbb{V}(i+1) \leq \left(\frac{1}{2} + \sqrt{\sqrt{5} + \frac{13}{4}}\right)\lambda_1$, $\mathbb{V}(i) \leq \left(\frac{1}{2} + \sqrt{\sqrt{5} + \frac{13}{4}}\right)\lambda_1$. So $\mathbb{V}(1) \leq (4\sqrt{5} + 9) \lambda_1$. Combine the two cases, we have that
\begin{align*}
    \mathbb{V}(1) \leq \max\left\{\frac{4\lambda_2^2}{\lambda_1}, (4\sqrt{5} + 9) \lambda_1\right\},
\end{align*}
which means
\begin{align*}
    \mathbb{V}(0) \leq \max\left\{4\lambda_2 + \lambda_1, \left(\sqrt{4\sqrt{5} + 9} + 1\right)\lambda_1 + 2\lambda_2\right\} \leq 6(\lambda_1 + \lambda_2),
\end{align*}
which concludes the proof.
\end{proof}


\section{Proof of the Lower Bounds}
\label{sec:lower_bound}
\subsection{Lower Bound for Offline Policy Evaluation}
Our lower bound instance is adapted from the instances in \citep{azar2013minimax, lattimore2014near, pananjady2020instance} for finite horizon time-homogeneous setting.
\begin{proof}
We consider a two-state MDP with state $s_1, s_2$, with an unique action $a$. $s_1$ is an absorbing state, which means $P(s_1|s_1, a) = 1$, while $P(s_2|s_2, a) = p$, $P(s_1|s_2, a ) = 1 - p$. We assume the reward is deterministic with  $r(s_1, a) = 0$, $r(s_2, a) = \frac{1}{H}$ that satisfies Assumption \ref{assump:bounded_total_reward}. Assume we want to have an accurate estimation of $V_1(s_2)$, which is equivalent to have a sufficient accurate estimation of $p$. With straightforward calculation, we have that
\begin{align*}
    V(p) := V_1(s_2) = \frac{p-p^{H+1}}{1-p} \frac{1}{H}.
\end{align*}
Notice that
\begin{align*}
    \frac{\partial V(p)}{\partial p} = \frac{1 - (1 + (1-p) H) p^{H} }{(1-p)^2} \frac{1}{H}.
\end{align*}
Let $p_1 = 1 - \frac{c_1}{H}$, where $c_1$ is an absolute constant, we know that
\begin{align*}
    \frac{\partial V(p_1)}{\partial p_1} = \frac{1 - c_1(1 - \frac{c_1}{H})^{H}}{(1-p_1)^2} \frac{1}{H}\geq \frac{1 - c_1 e^{-c_1}}{(1-p_1)^2} \frac{1}{H} = \frac{1- c_1 e^{-c_1}}{c_1^2} H,
\end{align*}
which is monotonically decreasing w.r.t $c_1$. Assume $p_2 = 1 - \frac{c_2}{H}$ where $c_2 < c_1$ is another absolute constant, we have that
\begin{align*}
    V(p_2) - V(p_1) \geq \frac{1 - c_1 e^{-c_1}}{(1-p_1)^2} \frac{1}{H}(p_2 - p_1) = \frac{1- c_1 e^{-c_1}}{c_1^2} (c_1 - c_2).
\end{align*}
We now use Le Cam's method to show that without sufficient number of data from $\text{Bern}(p)$, we cannot identify $p = p_1$ or $p = p_2$ with high probability, and thus cannot have ideal estimation error on both of $p_1$ and $p_2$. We start from the following lemma:
\begin{lemma}
    \begin{align*}
        \text{KL}(\text{Bern}(p)\|\text{Bern}(q)) \leq \frac{(p-q)^2}{q(1-q)}.
    \end{align*}
\end{lemma}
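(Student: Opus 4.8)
The plan is to expand the KL divergence between the two Bernoulli laws in closed form and then linearize each logarithm via the elementary bound $\log(1+x)\le x$, valid for all $x>-1$. Concretely, I would begin from
\[
\mathrm{KL}(\mathrm{Bern}(p)\,\|\,\mathrm{Bern}(q)) = p\log\frac{p}{q} + (1-p)\log\frac{1-p}{1-q},
\]
using the convention $0\log 0 = 0$ so that the boundary cases $p\in\{0,1\}$ are covered, and noting that $q\in(0,1)$ is implicit since otherwise the right-hand side of the claimed inequality is vacuous.

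First I would rewrite $\log\frac{p}{q} = \log\!\big(1+\tfrac{p-q}{q}\big)$ and $\log\frac{1-p}{1-q} = \log\!\big(1+\tfrac{q-p}{1-q}\big)$. Since $p\ge 0$ and $q>0$, the first argument lies in $(-1,\infty)$, and since $p\le 1$ and $q<1$, so does the second; hence $\log(1+x)\le x$ applies to both terms, giving
\[
\mathrm{KL}(\mathrm{Bern}(p)\,\|\,\mathrm{Bern}(q)) \le p\cdot\frac{p-q}{q} + (1-p)\cdot\frac{q-p}{1-q}.
\]
Then I would factor out $(p-q)$ and combine over the common denominator $q(1-q)$: the bracketed factor becomes $\frac{p(1-q)-q(1-p)}{q(1-q)} = \frac{p-q}{q(1-q)}$, so the right-hand side equals $\frac{(p-q)^2}{q(1-q)}$, which is exactly the claimed bound.

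There is no substantial obstacle in this argument; the only points needing a line of justification are the endpoint conventions for $p\in\{0,1\}$ and the check that both logarithm arguments remain in $(-1,\infty)$, both of which are immediate from $p\in[0,1]$ and $q\in(0,1)$. As a sanity check and alternative route, one may also observe that the right-hand side is precisely the $\chi^2$-divergence $\chi^2(\mathrm{Bern}(p)\,\|\,\mathrm{Bern}(q)) = \tfrac{(p-q)^2}{q} + \tfrac{(p-q)^2}{1-q}$, so the lemma is simply the Bernoulli specialization of the standard inequality $\mathrm{KL}\le\chi^2$.
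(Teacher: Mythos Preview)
Your argument is correct and follows essentially the same route as the paper: expand the Bernoulli KL, apply $\log(1+x)\le x$ to each term, and simplify to $(p-q)^2/(q(1-q))$. Your version is slightly more careful about the domain check and the $\chi^2$ remark is a nice addition, but the core proof is identical.
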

\begin{proof}
\begin{align*}
    & \text{KL}(\text{Bern}(p)\|\text{Bern}(q))\\
    = & p\log \frac{p}{q} + (1-p) \log \frac{1-p}{1-q}\\
    \leq & p \frac{p-q}{q} + (1-p)\log \frac{q-p}{1-p}\\
    = & \frac{(p-q)^2}{q(1-q)},
\end{align*}
where the inequality is due to the fact that $\log (1+x)\leq x$.
\end{proof}

Assume $\Psi:[0, 1]^{n(s_2, a)} \to \{p_1, p_2\}$ is a test with $n$ i.i.d samples from $\text{Bern}(p)$, and use $\mathbb{P}_1$ and $\mathbb{P}_2$ to denote the probability measure under $p_1$ and $p_2$, we have that
\begin{align*}
    & \inf_{\Psi} \left\{\mathbb{P}_1(\Psi(\mathcal{D}) \neq p_1) + \mathbb{P}_2(\Psi(\mathcal{D})\neq p_2)\right\} \\
    \geq & 1 - \|(\text{Bern}(p_1))^{n(s_2, a)} - (\text{Bern}(p_2))^n\|_{\text{TV}} \quad \text{(Le Cam's inequality)}\\
    \geq & 1 - \sqrt{\frac{n(s_2, a)}{2} \text{KL}\left(\text{Bern}(p_1)\|\text{Bern}(p_2)\right)}\quad \text{(Pinsker's inequality)}\\
    \geq & 1 - \sqrt{\frac{n(s_2, a)(p_1 - p_2)^2}{p_1(1-p_1)}}\\
    = & 1 - \sqrt{\frac{n(s_2, a)(c_1 - c_2)^2}{c_1(H-c_1)}}.
\end{align*}
Take $c_2 = c_1 - \sqrt{\frac{c_1(H-c_1)}{2n(s_2, a)}}$, we know that with probability at least $0.5$ we cannot identify $p = p_1$ or $p = p_2$. And notice that
\begin{align*}
    V(p_2) - V(p_1) \geq \frac{1 - c_1e^{-c_1}}{c_1^2}\sqrt{\frac{c_1(H-c_1)}{2n(s_2, a)}} \geq 2c_0 \sqrt{\frac{H}{n(s_2, a)}},
\end{align*}
where $c_0$ is an absolute constant only depends on $c_1$. Thus we know that, with $n(s_2, a)$ samples from $P(s_2, a)$, we must suffer from an estimation error of $\Omega\left(\sqrt{\frac{H}{n(s_2, a)}}\right)$ with probability at least $0.25$. Notice that we can set $n(s_2, a) = nd_m$, thus finishes the proof.
\end{proof}

\subsection{Lower Bound of Offline Policy Improvement}
We can further show the lower bound of offline improvement for finite horizon time-homogeneous MDP, based on the hard instance we mentioned above.
\begin{proof}
We introduce additional states $s_0$ and $s_3$ in the previous hard instance, with the transition from $s_0$, $P(s_1|s_0, a_1) = 1$, $P(s_3|s_0, a) = 1$, $\forall a \neq a_1$, and $s_3$ is an absorbing state with total reward in $H$ steps (i.e. $V_1(s_3)$) as $V(p_1) + c_0\sqrt{\frac{H}{nd_m}}$. We always start from $s_0$, and we need to choose the action at $s_0$. Notice that, if $p = p_2$, then the optimal arm is $a_1$, while if $p=p_1$, then the optimal arm is not $a_1$, both with a sub-optimal gap of at least $c_0\sqrt{\frac{H}{nd_m}}$, which finishes the proof.
\end{proof}
\section{Further Discussion}
\subsection{Time-Homogenous vs. Time-Inhomogenous}
\subsubsection{Offline Policy Evaluation}
One can notice that the lower bound of offline policy evaluation under time-homogenous setting (shown above) and time-inhomogenous setting (that can be simplified from the Cramer-Rao lower bound of \citep{jiang2016doubly}) are identical. It can be surprising at the first glance, as one common belief for reinforcement learning is that the time-homogenous MDP should be easier than the time-inhomogenous MDP. However, we remark that, this argument does not hold for the policy evaluation. This is due to the fact that in time-homogenous setting, as the error from the estimation of transition will be accumulated along the horizon, we need $\Omega(H)$ samples for each transition to make sure that the accumulated error should be $O(1)$. However, in the time-inhomogenous setting, the error from the estimation of each level transition is probably not accumulated, which means we only need $\Omega(1)$ samples for each transition at each level to make sure the final error to be $O(1)$. This can also be seen in the analysis of offline policy evaluation under time-inhomogenous setting in \citep{yin2020asymptotically, yin2020near}, where the authors decouple the error from each sample $(s_h, a_h, s_{h+1})$, which forms a martingale difference sequence with each step variance $O(1)$ that can then apply the Freedman's inequality to obtain a tight bound. Under time-inhomogenous setting, we notice that the error from each sample $(s, a, s^\prime)$ can only form a martingale difference sequence with each step variance $O(H)$, which will not lead to a tighter bound. We want to emphasize that \emph{the results in \citep{yin2020asymptotically, yin2020near} does not directly indicate results in this paper}.

Moreover, we notice that the analysis in \citep{yin2020asymptotically, yin2020near} can be translated to a value-dependent bound and thus can be horizon-free under Assumption \ref{assump:bounded_total_reward}, with the following lemma:
\begin{lemma}
\label{lem:value_bound}
Under Assumption \ref{assump:bounded_total_reward}, we have that
\begin{align*}
    \sum_{h\in [H]} \sum_{s, a} \xi_h^\pi(s, a) \text{Var}(r(s, a) + V_{h+1}^\pi(s^\prime)) \leq 3 v^\pi,
\end{align*}
where the term at the left hand side is exactly the variance term of MIS estimator considered in \citep[][Lemma 3.4]{yin2020asymptotically}
\end{lemma}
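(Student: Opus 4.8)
The plan is to prove the bound by two nested applications of the law of total variance, followed by the boundedness of the return. First I would fix a trajectory generated by $\pi$, i.e. $s_1\sim\mu$, $a_h\sim\pi(\cdot\mid s_h)$, $r_h\sim R(s_h,a_h)$, $s_{h+1}\sim P(\cdot\mid s_h,a_h)$, and introduce the process $Z_h := \sum_{t=1}^{h-1} r_t + V_h^\pi(s_h)$, with the convention $V_{H+1}^\pi\equiv 0$, so that $Z_1 = V_1^\pi(s_1)$ and $Z_{H+1} = \sum_{t=1}^H r_t =: G$. Using $\mathbb{E}_{a\sim\pi(\cdot\mid s)}[Q_h^\pi(s,a)] = V_h^\pi(s)$ and $\mathbb{E}[r_h + V_{h+1}^\pi(s_{h+1})\mid s_h,a_h] = Q_h^\pi(s_h,a_h)$, a short computation shows that $\{Z_h\}$ is a martingale with respect to $\mathcal{F}_h := \sigma(s_1,a_1,r_1,\dots,s_h)$, with increments $Z_{h+1}-Z_h = r_h + V_{h+1}^\pi(s_{h+1}) - V_h^\pi(s_h)$.

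Second, orthogonality of martingale increments gives the identity $\mathrm{Var}(G) = \mathrm{Var}(V_1^\pi(s_1)) + \sum_{h\in[H]}\mathbb{E}\big[\mathrm{Var}(r_h+V_{h+1}^\pi(s_{h+1})\mid s_h)\big]$, since $V_h^\pi(s_h)$ is $\mathcal{F}_h$-measurable so that $\mathrm{Var}(Z_{h+1}-Z_h\mid\mathcal{F}_h)$ depends only on $s_h$. Applying the law of total variance once more to peel off the randomness of $a_h$ yields $\mathrm{Var}(r_h+V_{h+1}^\pi(s_{h+1})\mid s_h) \ge \mathbb{E}_{a_h\sim\pi(\cdot\mid s_h)}\mathrm{Var}(r_h+V_{h+1}^\pi(s_{h+1})\mid s_h,a_h)$. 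Taking the expectation over $s_h$, summing over $h$, dropping the nonnegative term $\mathrm{Var}(V_1^\pi(s_1))$, and recognizing that $\sum_h\mathbb{E}_{(s_h,a_h)}[\,\cdot\,] = \sum_h\sum_{s,a}\xi_h^\pi(s,a)[\,\cdot\,]$, I obtain $\sum_{h\in[H]}\sum_{s,a}\xi_h^\pi(s,a)\mathrm{Var}(r(s,a)+V_{h+1}^\pi(s')) \le \mathrm{Var}(G) \le \mathbb{E}[G^2]$.

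Finally, by Assumption~\ref{assump:bounded_total_reward} the return $G = \sum_h r_h$ lies in $[0,1]$ almost surely (the rewards are nonnegative since $R$ maps into $\Delta(\mathbb{R}^+)$), hence $G^2 \le G$ and $\mathbb{E}[G^2] \le \mathbb{E}[G] = v^\pi \le 3v^\pi$; this is in fact sharper than the stated bound, and the factor $3$ merely leaves slack in case one prefers to bound $\mathrm{Var}(r+V')$ by $2\mathrm{Var}(r)+2\mathrm{Var}(V')$ and control the two pieces separately. The only thing requiring care is the conditioning bookkeeping: the two uses of the law of total variance must line up so that the conditional variance surviving in the final estimate is exactly $\mathrm{Var}_{R(s,a),P(s,a)}$ as in the statement, and one should verify the martingale property of $\{Z_h\}$ in the episodic model; beyond this I expect no real difficulty.
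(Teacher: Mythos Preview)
Your proof is correct and in fact yields the sharper constant $1$ rather than $3$. The route, however, is genuinely different from the paper's. The paper argues directly by telescoping squares of $Q$-values: it first bounds $\mathrm{Var}(r+V_{h+1}^\pi(s'))\le r(s,a)+\mathrm{Var}_{P(s,a)}[V_{h+1}^\pi(s')]$, then uses the flow identity $\sum_{s,a}\xi_h^\pi(s,a)\sum_{s'}P(s'|s,a)[V_{h+1}^\pi(s')]^2\le \sum_{s',a'}\xi_{h+1}^\pi(s',a')Q_{h+1}^\pi(s',a')^2$ (via Jensen on $V=\mathbb{E}_\pi Q$) to telescope $\sum_h\xi_h[Q_h^2-(PV_{h+1})^2]$, and finally bounds each difference $Q_h^2-(PV_{h+1})^2\le 2r$ since $Q_h-PV_{h+1}=r$ and both factors are at most $1$; this is where the constant $3$ comes from. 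Your argument instead packages the whole sum as the quadratic variation of the Doob martingale $Z_h=\sum_{t<h}r_t+V_h^\pi(s_h)$ and reads off $\sum_h\sum_{s,a}\xi_h^\pi(s,a)\mathrm{Var}(r+V_{h+1}^\pi(s')\mid s,a)\le \mathrm{Var}(G)\le \mathbb{E}[G^2]\le \mathbb{E}[G]=v^\pi$ from orthogonality of increments and $G\in[0,1]$. What your approach buys is a tighter constant and a cleaner one-line reduction to the classical variance Bellman decomposition; what the paper's approach buys is a computation that mirrors, in the true MDP, the same telescoping-of-powers mechanism used in their key recursion lemmas (Lemma~\ref{lem:recursion} and Lemma~\ref{lem:recursion_optimization}) for the empirical MDP, so it is stylistically of a piece with the rest of their analysis.
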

\begin{proof}
\begin{align*}
    & \sum_{h\in [H]} \sum_{s, a} \xi_h^\pi(s, a)\text{Var}\left(r(s, a) +  V_{h+1}^\pi(s^\prime)\right)\\
    \leq & \sum_{h\in [H]} \sum_{s, a} \xi_h^\pi(s, a) \left[r(s, a) + \sum_{s^\prime}P(s^\prime|s, a)[V_{h+1}^\pi(s^\prime)]^2 - \left(\sum_{s^\prime}P(s^\prime|s, a)V_{h+1}^\pi(s^\prime)\right)^2\right]\\
    \leq & \sum_{h\in [H]} \sum_{s, a} \xi_h^\pi(s, a)\left[r(s, a) + Q_{h}^\pi(s, a)^2 - \left[\sum_{s^\prime}P(s^\prime|s, a)V_{h+1}^\pi(s^\prime)\right]^2\right]\\
    \leq & 3\sum_{h\in [H]}\sum_{s, a}\xi_h^\pi(s, a)r(s, a) = 3v^\pi,
\end{align*}
where we use the following fact:
\begin{align*}
    & \sum_{s, a, s^\prime} \xi_h^\pi(s, a) P(s^\prime|s, a)\left[V_{h+1}^\pi(s^\prime)\right]^2\\
    = & \sum_{s^\prime} \xi_{h+1}^\pi(s^\prime)\left[V_{h+1}^\pi(s^\prime)\right]^2\\
    = & \sum_{s^\prime} \xi_{h+1}^\pi(s^\prime)\left[\sum_{a^\prime} \pi(a^\prime|s^\prime)Q_{h+1}^\pi(s^\prime, a^\prime)\right]^2\\
    \leq & \sum_{s^\prime} \xi_{h+1}^\pi(s^\prime)\sum_{a^\prime} \pi_{h+1}(a^\prime|s^\prime) Q_{h+1}^\pi(s^\prime, a^\prime)^2\\
    = & \sum_{s^\prime, a^\prime} \xi_{h+1}^\pi(s^\prime, a^\prime) Q_{h+1}^\pi(s^\prime, a^\prime)^2.
\end{align*}
\end{proof}
This shows that the offline policy evaluation under time-inhomogeneous setting is also horizon-free, which also matches the intuition that if the density ratio can be lower bounded, then we can construct a trajectory-wise importance sampling (IS) estimator that only depends on the number of episodes. Notice that, in \citep{yin2020asymptotically, yin2020near}, the higher order term has an additional $\sqrt{SA}$ factor. Tightening such factor is beyond the scope of this paper.
\subsubsection{Offline Policy Optimization}
For the offline policy optimization, it's known that time-inhomogeneous MDP cannot avoid the additional $H$ factor in sample complexity, thus can never be horizon-free. We also want to remark that with Lemma \ref{lem:value_bound}, the results in \citep{yin2020near} can be translated to a $\tilde{O}\left(\sqrt{\frac{H^2}{nd_m}}\right)$ error bound as well as a $\tilde{O}\left(\frac{H^2}{d_m\epsilon^2}\right)$ sample complexity bound that holds for all range of $\epsilon \in (0, 1]$. Our bound, however, have an additional higher order term $\tilde{O}\left(\frac{SH}{d_m \epsilon}\right)$. We would like to remark that, this is due to the time-homogeneous nature of our setting, which makes $V_h^{\hat{\pi}^*}$ heavily depends on $\hat{P}$. On the other hand, in time-inhomogeneous setting, $V_h^{\hat{\pi}^*}$ only depends on $\hat{P}_{k}(s, a)$ for $k > h$, thus can directly apply Bernstein's inequality when bounding the term $\left(\hat{P}_h(s, a) - P_h(s, a)\right) V_{h+1}^{\hat{\pi}^*}$, which will not introduce additional $S$ factor. The best known result for finite horizon time-homogenous MDP \citep{zhang2020reinforcement} also has this additional $S$ factor, and how to eliminate this additional $S$ factor remains an open problem.
\subsection{Finite-Horizon vs. Infinite-Horizon}
\subsubsection{Offline Policy Evaluation}
We remark that \citep{li2020breaking} provides a value-dependent bound for the policy evaluation under infinite-horizon generative model setting that accommodates full range of $\epsilon$. We obtain the similar results in finite-horizon setting that can accommodate full range of $\epsilon$, however, with a different and probably simpler analysis.
\subsubsection{Offline Policy Optimization}
\citep{li2020breaking} also provides a minimax-optimal sample complexity bound up to logarithmoc factors for policy optimization with generative model that accommodates full range of $\epsilon$. We notice that such kinds of analysis cannot be directly applied to the finite-horizon setting, as their ``absorbing MDP'' technique cannot be directly applied to the finite-horizon MDP, due to the difference of time-homogenous value function in infinite-horizon setting and time-inhomogenous value function in finite-horizon setting, which has been pointed out by \citep{cui2020plug}. And thus most of the existing work does not provide a sample complexity bound that can match the lower bound. To the best of our knowledge, our work first provide a sample complexity bound that match the lower bound up to logarithmic factors and an high-order term. 

\subsection{With General Function Approximation}
There are also works considering the offline policy optimization with general function approximation under different kinds of function class assumption like realizability and completeness \citep[e.g.][]{pmlr-v97-chen19e, pmlr-v124-xie20a, xie2020batch}, which generally do not imply tight bounds under certain scenarios like e.g. tabular MDP. We leave the extension to general function approximation as future work.

\subsection{Without Sufficient Exploratory Data}
Recently, \citep{liu2020provably, kumar2020conservative} also introduces another perspective on performing offline policy optimization within a local policy set when the offline data is not sufficient exploratory, which is different from the global policy optimization we consider here. We want to emphasize that, if we want to approach the global optimal policy, our assumption on good data coverage \ie Assumption \ref{assump:data_coverage} is necessary. Otherwise, we will suffer from the error from under-explored state-action pair, as Theorem \ref{thm:lower_bound_evaluation} and Theorem \ref{thm:lower_bound_improvement} suggests.
\end{document}